\theoremstyle{plain}
\newtheorem{theorem}{Theorem}
\newtheorem{lemma}[theorem]{Lemma}
\newtheorem{assumption}[theorem]{Assumption}
\theoremstyle{definition}
\newcommand{\save}[1]{{\vspace{0in}}#1{\vspace{0in}}}
\newcommand\given[1][]{\,\vert\,}
\newcommand{\eqd}{\mathop{=}\limits^d}
\newcommand{\alphahat}{\widehat\alpha}
\newcommand{\betahat}{\widehat\beta}
\newcommand{\Rcal}{\mathcal{R}}
\newcommand{\arrowas}{\overset{\rm a.s.}{\longrightarrow}}
\newcommand{\muhat}{\widehat\mu}
\newcommand{\Sigmahat}{\widehat\Sigma}
\setlist{leftmargin=10mm}
\title{High Dimensional Binary Classification under Label Shift: Phase Transition and Regularization}
\author{Jiahui Cheng \and Minshuo Chen \and Hao Liu \and Tuo Zhao \and Wenjing Liao \thanks{Jiahui Cheng and Minshuo Chen contributed equally to this work. Jiahui Cheng and Wenjing Liao are affiliated with the School of Mathematics at Georgia Institute of Technology. Minshuo Chen is affiliated with the Department of Electrical and Computer Engineering at Princeton University. Hao Liu is affiliated with the Department of Mathematics at Hong Kong Baptist University. Tuo Zhao is affiliated with the School of Industrial and Systems Engineering at Georgia Institute of Technology. Email: \{jcheng328,tourzhao,wliao60\}@gatech.edu, mc0750@princeton.edu, haoliu@hkbu.edu.hk. This research is partially supported by NSF DMS 2012652 and NSF CAREER 2145167.}}
\date{}
\begin{document}
	\maketitle
	
	\begin{abstract}
		Label Shift has been widely believed to be harmful to the generalization performance of machine learning models. Researchers have proposed many approaches to mitigate the impact of the label shift, e.g., balancing the training data. However, these methods often consider the underparametrized regime, where the sample size is much larger than the data dimension. The research under the overparametrized regime is very limited. To bridge this gap, we propose a new asymptotic analysis of the Fisher Linear Discriminant classifier for binary classification with label shift. Specifically, we prove that there exists a phase transition phenomenon: Under certain overparametrized regime, the classifier trained using imbalanced data outperforms the counterpart with reduced balanced data. Moreover, we investigate the impact of regularization to the label shift: The aforementioned phase transition vanishes as the regularization becomes strong.
		\\
		
		\noindent\textbf{Keywords:} Linear discriminant analysis, Binary classification, Label shift, Underparametrized and overparametrized regime, double descent phenomenon
	\end{abstract}

% 	\tableofcontents
	
\save{\section{Introduction}}
Label shift \cite{QuioneroCandela2009WhenTA} occurs predominantly 
in classification tasks, in domains like computer vision \citep{barandela2003restricted,xiao2010sun}, medical diagnosis \citep{grzymala2004approach,mac2002problem}, fraud detection \citep{philip1998toward} and others \citep{haixiang2017learning,radivojac2004classification}.
Label shift often stems from, for example, a non-stationary environment and a biased way that the training and test data sets are collected.

The basic assumption in label shift \citep{lipton2018detecting} is that, while the class prior changes, the conditional distributions of data within a class are maintained in training and testing. 
A well known special case of label shift is learning with imbalanced data  \citep{cao2019learning,zhao2021active} where the training are remarkably imbalanced due to some sampling bias, while the test data have a more balanced prior on the labels, e.g., uniform prior. It is commonly believed that training with imbalanced data can significantly undermine the overall performance of the trained classifiers \citep{japkowicz2002class,mazurowski2008training}. 

Typically, the test label distribution is unknown and many methods have been proposed to estimate the test priors \citep{Storkey09whentraining,chan05estimation}. When the test label distribution has been estimated, learning under label shift \citep{bishop1995nn,elkan2001costsensitive} reduces to the problem of resampling the training data. Common techniques include oversampling the minority class, downsampling the majority class \citep{branco2015survey,he2009learning,weiss2004mining}, and reweighting \citep{huang2016learning,wang2017learning}. For example, \citet{seiffert2009rusboost} integrate downsampling with boosting for classification with imbalanced training data. The performance (measured by the area under the ROC curve) improves compared to base line methods (e.g., adaboost). However, these balancing method comes with several drawbacks. When the data imbalancing is extreme, downsampling incurs significant loss of information, and oversampling can lead to over-fitting \citep{chawla2002smote,cui2019class}.  Reweighting methods tend to make the optimization of deep models difficult \citep{huang2016learning,cui2019class}.

The aforementioned findings primarily focus on the underparametrized regime, where the sample size is much larger than the data dimension. Recently, significant progress have been made in training overparametrized models, such as deep neural networks. Such progress have stimulated empirical and theoretical studies on overparametrized models, whose statistical properties surprisingly challenge the conventional wisdom. For example, the typical U-shaped bias-variance trade-off curve is complemented by the double descent phenomenon observed in various models (see more in the related work section, \citet{belkin2019reconciling,hastie2019surprises,bartlett2020benign,mei2019generalization}).  

In this paper, we study binary classification with label shift. The classifier is taken as Fisher Linear Discriminant Analysis (LDA, \citet{fisher1936use,bishop:2006:PRML}). 
We consider a Gaussian mixture data model, where $x\in \RR^p$
is the feature, $y \in \{0, 1\}$ is the label, and $x \given y$ is Gaussian distributed. Suppose $n$ training data are collected under certain prior with $n_{\ell}$ samples in class $\ell$ such that $n_0 +n_1 =n$. Specifically, data imbalance refers to the case where the majority and minority class priors do not match, i.e., $n_1/n_0 \neq 1$. The test prior on each class is denoted as $\pi_{\ell} = \PP(y=\ell)$ for $\ell =0,1$. We assume the test priors are known and different from the training priors. When the test priors are not known, we can still estimate them from the empirical label distributions in the test data.

{\bf Our contributions}. We provide a theoretical analysis on the performance of LDA under label shift, in both the under- and over-parametrized regime. We explicitly quantify the misclassification error in the proportional limit of $n \rightarrow \infty$ and $p/n_{\ell}  \rightarrow \gamma_{\ell}$ for $\ell=0,1$, where $\gamma_{\ell} > 0$ is a constant.
Our theory shows a peaking phenomenon when the sample size is close to the data dimension.

We demonstrate a {\it phase transition} phenomenon about data imbalance: The misclassification error exhibits different behaviors as the two-class ratio $n_1/n_0$ varies, depending on the value of $\gamma_0$.
In particular, when $\gamma_0$ is fixed and the ratio $n_1/n_0$ increases from $1$, we observe the following three phases:

\begin{itemize}
\setlength\itemsep{0em}

\item In the underparametrized regime (e.g., $\gamma_0 = 0.5$), the misclassification error first decreases then increases as $\gamma_1$ decays, yet the error decrease is marginal.
  
\item In the lightly overparametrized regime (e.g., $\gamma_0 = 2.5$), the misclassification error first increases then decreases as $\gamma_1$ decays. 
  
\item In the overparametrized regime (e.g., $\gamma_0 = 5$), the misclassification error first decreases then increases, and finally decreases again as $\gamma_1$ decays.
% depending on the value of $\gamma_0$.
% \vspace{-0.1in}
\end{itemize}
  
Such a phase transition suggests that LDA trained with imbalanced data can outperform the counterpart trained with reduced balanced data, in certain overparametrized regime.
  
Moreover, we investigate the impact of the $\ell_2$ regularization on the performance of LDA under label shift: The aforementioned phase transition vanishes when the regularization is sufficiently strong. While the phase transition persists when the regularization is weak.

{\bf Related work}. 
 In literature, many methods have been developed to handle classification under label shift. The sampling techniques include the informed upsampling \citep{liu2008exploratory,kubat1997addressing}, synthetic oversampling \citep{chawla2002smote}, cluster-based oversampling \citep{jo2004class}. Cost-sensitive methods \citep{sun2007cost, huang2016learning} use a cost matrix to represent the penalty of
classifying examples from one class to another. Examples are cost-sensitive decision trees \citep{maloof2003learning} and cost-sensitive neural networks \citep{kukar1998cost}. Kernel-based methods are developed in \citep{liu2006boosting,wang2010boosting,hong2007kernel}.
Despite the empirical success, there are limited theories about how the classification results are affected by data imbalance. 

Statistical properties of LDA has been well established in existing works (\citet{anderson1962introduction}, \citet[Section 10.2]{fukunaga2013introduction}, \citet{velilla2005consistency, zollanvari2011analytic, zollanvari2015generalized, sifaou2020high}). LDA with balanced training and test data is studied in \citet{raudys1998expected}  in the overparametrized case, while the assumption is more restrictive than ours, e.g., the feature vector is Gaussian with the identity covariance matrix in \citet{raudys1998expected}. In the asymptotic regime (i.e., $p, n \rightarrow \infty$), \citet{bickel2004some} show that, when $p/n\rightarrow \infty$, LDA tends to random guessing. Later, \citet{wang2018} consider the proportional scenario where $p / n \rightarrow \gamma \in (0, 1)$. Our theory is more general, and covers both $0<\gamma<1$ (underparametrization) and $\gamma>1$ (overparametrization). 
The misclassification error of Regularized LDA is analyzed in \citet{elkhalil2020large}. Our error analysis on LDA  can not be implied from \citet{elkhalil2020large} by taking the limit of the regularization parameter to $0$ since the covariance matrix is not invertible in the overparametrized case.
We note a parallel line of work studying LDA in sparsity constrained high-dimensional binary classification problems \citep{cai2011direct,shao2011sparse,mai2012direct}.

Our theory demonstrates a peaking phenomenon of LDA, which has been recognized in history \citep{hughes1968mean,duin1995small,hua2005optimal,sima2008peaking} and recently for neural networks \citep{belkin2019reconciling}. This phenomenon has been justified for linear regression \citep{hastie2019surprises,bartlett2020benign,belkin2020two,muthukumar2020harmless,bibas2019new}, random feature regression \citep{mei2019generalization}, logistic regression \citep{AMDDHBLC},  max-margin linear classifier \citep{GEMMLC}, and others \citep{xu2019number,derezinski2019exact,nakkiran2019more}. To our knowledge, we are the first to provide a theoretical justification of the peaking phenomenon for LDA under label shift.

The rest of the paper is organized as follows: Section \ref{sec:BinaryClassificationUsingLDA} introduces LDA; Section \ref{sec:PeakingPhenomenonOfLDA} presents an asymptotic analysis of the misclassification error for LDA, and the phase transition phenomenon under data imbalance; Section \ref{sec:RegularizationImpactOnMisclassification} presents the impact of regularization; Section \ref{sec:Experiments} presents real-data experiments; Section \ref{sec:proof} presents a proof of our main results; Section \ref{sec:ConclusionAndDiscussion} discusses binary classification with extremely imbalanced data and in the highly overparametrized regime. We also discuss future directions.

{\bf Notation}: Given a vector $v \in \RR^p$, we denote $\norm{v}_\Sigma=\sqrt{v^\top\Sigma v}$ for a positive definite matrix $\Sigma$. Given a matrix $M$, we denote $M^\dagger$ as its pseudo-inverse. Let $\Phi(\cdot)$ be the CDF of the standard normal distribution. For two random variables $X$ and $Y$, we denote $X\eqd Y$ as $X$ and $Y$ having the same distribution. For a sequence of random variables $\{X_n\}$, we denote $X_n\overset{\rm a.s.}{\longrightarrow}X$ as the almost sure convergence.

\save{\section{Binary Classification using LDA}} \label{sec:BinaryClassificationUsingLDA}

Binary classification aims at classifying an input feature $x\in \RR^p$ into two classes labeled by $y \in \{0,1\}$. A linear classifier achieves this goal by predicting the label based on a linear decision boundary in the form of $\beta^\top (x-\alpha) = b$ with $\alpha, \beta \in \RR^p$ and $b\in \RR$.

To be specific, a linear classifier gives the label $y$ of the feature $x$ by
\begin{align}
f^b_{\alpha,\beta}(x) = 
\begin{cases}
    0, \quad \text{if }\beta^\top (x-\alpha)>b,\\
    1, \quad \text{if }\beta^\top (x-\alpha)\leq b.
\end{cases}
\label{eq:linearclassifier}
\end{align}
Given the class priors $\pi_0 + \pi_1 = 1$, we determine $\alpha, \beta$ and $b$ by minimizing the misclassification error defined as
\begin{equation}\label{eq:miscls-err}
\begin{split}
    \mathcal{R} \left(f^b_{\alpha,\beta}\right)  & = \pi_0 \PP\left(f^b_{\alpha,\beta}(x)=1\given y=0\right) 
    \\  &+ \pi_1 \PP\left(f^b_{\alpha,\beta}(x)=0\given y=1\right).
\end{split}
\end{equation}
%Without further data assumptions we cannot find explicit formula for $\alpha,\beta$. 
LDA approaches the binary classification problem by assuming that the conditional distribution of $x$ given label $y=0$ (resp. $y=1$) is Gaussian $\cN(\mu_0, \Sigma)$ (resp. $\cN(\mu_1, \Sigma)$). Accordingly, the optimal classifier in LDA (also known as the Bayes rule) takes 
\begin{equation}
\label{eq:alphabetastar}
\alpha^* = \frac{\mu_0+\mu_1}{2}, ~~\beta^* = \Sigma^{-1} \left(\mu_0-\mu_1\right), ~~ b^* = \ln \frac{\pi_1}{\pi_0}.
\end{equation}
The decision boundary $\beta^*$ coincides with the Fisher linear discriminant rule \citep{fisher1936use}, which maximizes the ratio of between-class variance and within-class variance:
\begin{equation}
  \argmax_{\beta \in \RR^p}~   \frac{ (\beta^\top \mu_1- \beta^\top \mu_0)^2}{\beta^\top  \Sigma \beta}.
  \label{eqmaxfisher}
\end{equation}
The optimal ratio in \eqref{eqmaxfisher} at $\beta^*$ is defined as the Signal-to-Noise Ratio (SNR), i.e., $\textrm{SNR} = \norm{\beta^*}_\Sigma^2 = (\beta^*)^{\top} \Sigma \beta^*$.

In practice, we receive $n_0$ and $n_1$ i.i.d training data points from class $0$ and $1$, respectively. We denote the per-class data points as $\{x^\ell_i\}_{i=1}^{n_\ell}$ for $\ell = 0, 1$. The total number of samples is $n = n_0 + n_1.$ We obtain the empirical Fisher linear discriminant classifier $f_{\hat\alpha,\hat \beta}^{\hat b}$ with
\begin{align}\label{eq:estimator}
\hat{\alpha} = \frac{\hat{\mu}_0 + \hat{\mu}_1}{2}, \quad \hat{\beta} = \hat{\Sigma}^{\dagger}(\hat{\mu}_0 - \hat{\mu}_1), \quad \hat{b} = \ln \frac{n_1}{n_0},
\end{align}
where $\hat{\mu}_0, \hat{\mu}_1$ and $\hat{\Sigma}$ are empirical estimators of $\mu_0, \mu_1, \Sigma$:
\begin{align*}
\hat{\mu}_\ell = \frac{1}{n_\ell} \sum_{i=1}^{n_\ell} x_i^\ell, ~~ \hat{\Sigma} = \frac{1}{n-2} \sum_{\ell = 0, 1}\sum_{i=1}^{n_\ell} (x_i^\ell - \hat{\mu}_\ell)(x_i^\ell - \hat{\mu}_\ell)^\top.
\end{align*}

\save{\section{Phase Transition of LDA under Label Shift}}\label{sec:PeakingPhenomenonOfLDA}

In this section, we present our main results on the misclassification error analysis of LDA, which covers both the under- and over-parametrized regime. 
\save{\subsection{Error Analysis of LDA}}\label{sec:peaklda}
We first introduce a data model for our theoretical analysis.
\begin{assumption}\label{asm:training data and test data} For both the training and the test data, the conditional distribution of $x$ given $y=\ell$ is Gaussian, i.e., $$x \given(y=\ell) \sim \cN(\mu_\ell,\Sigma), \quad\textrm{for} \quad \ell=0,1.$$ The training data $\{x_i^\ell\}_{i=1}^{n_\ell}$ are i.i.d. sampled for class $\ell = 0, 1$, respectively. The test data have priors $\pi_0$, $\pi_1$ such that $\pi_0+\pi_1=1$. \end{assumption}

Assumption \ref{asm:training data and test data} allows arbitrarily imbalanced training data with $n_0 \neq n_1$ where the test label distribution can be different from the training data.
Under Assumption \ref{asm:training data and test data}, we prove an asymptotic behavior of the misclassification error of LDA in the limit of $n,p\to\infty$.

\begin{theorem}
\label{theorem1}
Let $\gamma_0,\gamma_1$ and $\Delta^2$ be positive constants and set $\gamma=\frac{\gamma_0\gamma_1}{\gamma_0+\gamma_1}$.
Under Assumption \ref{asm:training data and test data}, we let $n,p\to\infty$ with
\begin{equation*}
p/n_0 \to \gamma_0,\ p/n_1 \to \gamma_1,\ p/n \to \gamma, \text{ and } \norm{\beta^*}_\Sigma^2\to\Delta^2.
% \label{theorem1ratio}
\end{equation*} 
Then the misclassification error $\cR(f_{\alphahat,\betahat}^{\hat b})$ converges to a limit almost surely when $\gamma \in (0,1)\cup (1,\infty)$, i.e.,
\begin{equation}\label{eq:theorem-underparametrized}
    \Rcal (f_{\alphahat, \betahat}^{\hat b}) \overset{\rm a.s.}{\rightarrow} \sum_{\ell=0,1}\pi_\ell \Phi\left(\frac{\frac{g(\gamma_0, \gamma_1, \ell)}{1-\gamma}+(-1)^\ell\ln{\frac{\gamma_0}{\gamma_1}}}{k(\gamma_0, \gamma_1)\frac{1}{(1-\gamma)^{3/2}}}\right)
\end{equation}
for $0 < \gamma < 1$, and 
\begin{equation}\label{eq:theorem-overparametrized}
    \mathcal{R}(f_{\alphahat, \betahat}^{\hat b}) \overset{\rm a.s.}{\rightarrow}\sum_{\ell=0,1}\pi_\ell \Phi\left(\frac{\frac{g(\gamma_0, \gamma_1, \ell)}{\gamma(\gamma-1)}+(-1)^\ell\ln{\frac{\gamma_0}{\gamma_1}}}{k(\gamma_0, \gamma_1)\frac{1}{(\gamma-1)^{3/2}}}\right)
\end{equation}
for $\gamma > 1$, where
\begin{align*} 
g(\gamma_0, \gamma_1, \ell) &= -\frac{1}{2}(\Delta^2 + (-1)^\ell (\gamma_0 - \gamma_1)) \\
 k(\gamma_0, \gamma_1) &= \sqrt{\Delta^2 + \gamma_0 + \gamma_1}.
 \end{align*}
\end{theorem}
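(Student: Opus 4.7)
The plan is to reduce the misclassification error to three scalar quantities built from the training statistics and then identify their almost-sure limits via a mix of Gaussian concentration and Marchenko--Pastur asymptotics. First I would exploit rotational invariance of the Gaussian model: the map $x\mapsto\Sigma^{-1/2}x$ leaves the classifier, its misclassification error, and the scalars $\betahat^\top(\mu_\ell-\alphahat)$ and $\betahat^\top\Sigma\betahat$ all invariant, while $\|\mu_0-\mu_1\|_{\Sigma^{-1}}^2\to\Delta^2$ plays the role of the SNR, so we may take $\Sigma=I_p$. Since $x\mid y=\ell$ is Gaussian, conditional on the training set $\betahat^\top(x-\alphahat)$ is a univariate Gaussian and
\begin{equation*}
\mathcal{R}(f_{\alphahat,\betahat}^{\hat b}) \;=\; \sum_{\ell=0,1}\pi_\ell\,\Phi\!\left(\frac{(-1)^\ell\bigl[\hat b-\betahat^\top(\mu_\ell-\alphahat)\bigr]}{\sqrt{\betahat^\top\Sigma\betahat}}\right).
\end{equation*}
It therefore suffices to find the a.s.\ limits of $\hat b$, $\betahat^\top(\mu_\ell-\alphahat)$, and $\betahat^\top\Sigma\betahat$; the first is trivial, $\hat b=\ln(n_1/n_0)\to\ln(\gamma_0/\gamma_1)$.

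For the latter two I would use the classical Gaussian fact that, under shared covariance, the per-class sample means $\muhat_0,\muhat_1$ are independent of the pooled covariance $\Sigmahat$. Writing $\muhat_\ell=\mu_\ell+z_\ell$ with $z_\ell\sim\mathcal{N}(0,I/n_\ell)$ independent, $d=\mu_0-\mu_1$, and $A=\Sigmahat^\dagger$,
\begin{equation*}
\betahat^\top(\mu_\ell-\alphahat)=(d+z_0-z_1)^\top A\!\left(\tfrac{(-1)^\ell}{2}d-\tfrac{z_0+z_1}{2}\right),\qquad \betahat^\top\Sigma\betahat=\bigl\|A(d+z_0-z_1)\bigr\|^2.
\end{equation*}
Conditioning on $A$, Hanson--Wright combined with Borel--Cantelli reduces every bilinear/quadratic form in $(z_0,z_1)$ to its conditional mean, which is a trace in $A$. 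The deterministic piece $d^\top Ad$ contributes $\Delta^2\cdot\tfrac{1}{p}\mathrm{tr}(A)$ by the rotational invariance of $A$ on its range; the pure-noise pieces contribute $(1/n_\ell)\mathrm{tr}(A)$ or $(1/n_\ell)\mathrm{tr}(A^2)$; and the nontrivial cross term $(z_0-z_1)^\top A(z_0+z_1)=z_0^\top Az_0-z_1^\top Az_1$ does not vanish for $n_0\ne n_1$ and produces the $(\gamma_0-\gamma_1)\cdot\tfrac{1}{p}\mathrm{tr}(A)$ contribution responsible for the $(\gamma_0-\gamma_1)$ term inside $g$.

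The main technical obstacle is identifying the limits of $\tfrac{1}{p}\mathrm{tr}(A)$ and $\tfrac{1}{p}\mathrm{tr}(A^2)$, which requires the Marchenko--Pastur law in two regimes. For $\gamma\in(0,1)$ the matrix $\Sigmahat$ is a.s.\ invertible and classical MP moments give $\tfrac{1}{p}\mathrm{tr}(\Sigmahat^{-1})\to 1/(1-\gamma)$ and $\tfrac{1}{p}\mathrm{tr}(\Sigmahat^{-2})\to 1/(1-\gamma)^3$, directly producing the factors $1/(1-\gamma)$ and $1/(1-\gamma)^{3/2}$ in \eqref{eq:theorem-underparametrized}. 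For $\gamma>1$ the pseudo-inverse requires a separate argument because $\Sigmahat$ has rank $n-2<p$: I would pass to the companion matrix, writing $\Sigmahat=XX^\top/(n-2)$ and noting that the nonzero spectrum coincides with that of $X^\top X/(n-2)\in\mathbb{R}^{(n-2)\times(n-2)}$, which has aspect ratio $(n-2)/p\to 1/\gamma<1$. Applying MP to the companion and pulling back yields $\tfrac{1}{p}\mathrm{tr}(\Sigmahat^\dagger)\to 1/(\gamma(\gamma-1))$ and $\tfrac{1}{p}\mathrm{tr}(\Sigmahat^{\dagger 2})\to 1/(\gamma-1)^3$, producing the factors $1/(\gamma(\gamma-1))$ and $1/(\gamma-1)^{3/2}$ in \eqref{eq:theorem-overparametrized}. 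Assembling all limits and simplifying algebraically gives the two stated formulas.
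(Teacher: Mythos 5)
Your proposal is correct and follows essentially the same route as the paper: reduce the risk to $\Phi$ of the ratios built from $\betahat^\top(\mu_\ell-\alphahat)$, $\ln(n_1/n_0)$ and $\|\betahat\|_\Sigma$, exploit independence of the per-class sample means and the pooled covariance, collapse all quadratic forms to $\tfrac{1}{p}\mathrm{tr}(\Sigmahat^\dagger)$ and $\tfrac{1}{p}\mathrm{tr}\bigl((\Sigmahat^\dagger)^2\bigr)$, and evaluate these limits via the Marchenko--Pastur law, passing to the companion matrix with aspect ratio $1/\gamma$ in the overparametrized case. The only minor technical difference is that you control the random quadratic forms by Hanson--Wright concentration conditional on $\Sigmahat^\dagger$ (plus Borel--Cantelli), whereas the paper uses the exact rotational-invariance (isotropicity) identity for the Wishart pseudo-inverse combined with the strong law of large numbers; both yield the same trace reductions and the same final formulas.
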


Theorem \ref{theorem1} is proved in Section \ref{appendix:pfthm1}.
Theorem \ref{theorem1} demonstrates a peaking phenomenon of LDA. For simplicity, we illustrate this peaking phenomenon for balanced training ($\gamma_0=\gamma_1$) and test data ($\pi_0 = \pi_1 = 0.5$). In this case, the limit of $\mathcal{R}(f_{\alphahat, \betahat}^{\hat b})$ can be simplified to
\begin{equation*}
    \Rcal(f_{\alphahat,\betahat}^{\hat b})\overset{\rm a.s.}{\longrightarrow}    \begin{cases}
        \Phi\left(-\frac{\Delta^2 \sqrt{1-\gamma}}{2\sqrt{\Delta^2 + 4\gamma}}\right), &\ 0<\gamma<1,\\
        \Phi\left(-\frac{\Delta^2 \sqrt{\gamma-1}}{2\gamma \sqrt{\Delta^2 + 4\gamma}}\right), &\ \gamma>1.
    \end{cases}
     \label{eq.R.balance}
\end{equation*}
Figure \ref{fig:limiting_risk} (left panel) shows the asymptotic misclassification error as a function of $\gamma$ with various SNR. 

A peak occurs as $\gamma$ approaches $1$, when the training sample size is approximately equal to the data dimension. In the {\it underparametrized} regime ($0<\gamma<1$), the misclassification error increases with respect to $\gamma$. In the {\it overparametrized} regime ($\gamma>1$), the misclassification error has a local minimum such that the error first decreases and then increases. This peaking phenomenon exists for all levels of SNR.

 We interpret the peaking phenomenon as an interplay between the conditioning of the covariance matrix and the variance in the statistical estimation of the means and the covariance matrix. With balanced training data, the training model and the test model are the same so there is no model mismatch.
  In the underparametrized regime ($0<\gamma<1$), the covariance matrix is full rank. In this case,  variance dominates the estimation error, and variance decreases as the sample size increases, since a larger number of samples yield better estimations of $\mu_0, \mu_1$, and $\Sigma$.   Therefore, the misclassification error decreases as $\gamma$ decreases. In the overparametrized regime ($\gamma>1$), the covariance matrix is rank-deficient.  According to Bai-Yin theorem \citep{bai1993}, the condition number of $\Sigma^\dagger$ is proportional to $(1-\sqrt{1/\gamma})^{-2}$, which decreases as $\gamma$ increases from $1$. As a result, the misclassification error decreases as $\gamma$ increases from $1$. When $\gamma$ further increases, variance dominates the error due to limited number of samples, so the error increases again. This explains the local minimum when $\gamma>1$.
  
\begin{figure}[ht]
    % \vspace{-0.1cm}
    \centering
    \includegraphics[width=0.9\textwidth]{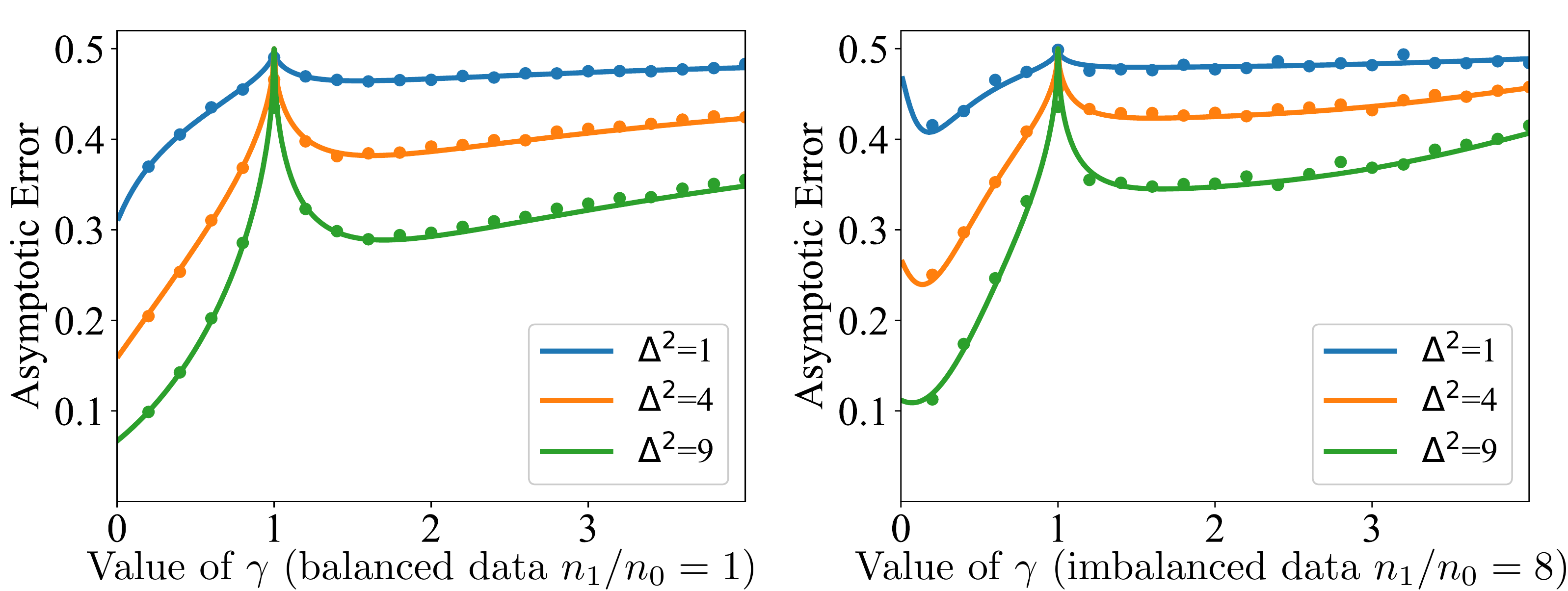}
    \caption{Misclassification error of LDA as a function of $\gamma$ when $n_1/n_0=1$ (left panel) and $n_1/n_0=8$ (right panel), with balanced test data ($\pi_0 = \pi_1 = 0.5$), for various levels of asymptotic SNR. The solid curve represents the theoretical error given by Theorem \ref{theorem1}, and the scatters denote synthetic data errors, with $n=200,p=\lceil \gamma n \rceil$.}
    \label{fig:limiting_risk}
    % \vspace{-0.1cm}
\end{figure}
  More interestingly, under the label shift with imbalanced training data ($\gamma_0 \neq \gamma_1$) and balanced test data ($\pi_0=\pi_1=0.5$), there are trade-offs among three factors: (1) the conditioning of the pseudo-inverse of the covariance matrix; (2) the variance in the statistical estimation of the means and the covariance matrix; (3) an additional model mismatch. The misclassification error exhibits intricate behaviors with multiple local minima in the error curve. See Figure \ref{fig:limiting_risk} (right panel) for an example with $n_1/n_0=8$.
  
\save{\subsection{Phase Transition under Label Shift}}\label{sec:phaselda}
\vspace{-1mm}
A critical question for classification under label shift is: When the class priors vary between the training data and the test data, is it beneficial to correct the training data distribution?

In order to evaluate the overall performance of classifier trained with imbalanced data, we usually consider a special case of label shift with balanced test data ($\pi_0=\pi_1=0.5$). The question above is reduced to whether it is beneficial to downsample the majority class. Theorem \ref{theorem1} suggests an interesting dichotomy to the question above. Specifically, we fix $\gamma_0$ and investigate how the two-class ratio $n_1/n_0 \in [1, 10]$ affects the performance of LDA. We identify three distinct behaviors of LDA depending on the value of $\gamma_0$. We demonstrate the three behaviors in Figure \ref{fig:lda_asym_risk_imbalanced123}:

\noindent$\bullet$ {\it Behavior I:} When $\gamma_0=0.5$, i.e., the class of $\ell = 0$ is underparametrized, the misclassification error first decreases and then increases as a function of $n_1/n_0$; 

\noindent$\bullet$ {\it Behavior II:} When $\gamma_0 =2.5$, i.e., the class of $\ell = 0$ is slightly overparametrized, the misclassification error first increases and then decreases as a function of $n_1/n_0$;

\noindent$\bullet$ {\it Behavior III:} When $\gamma_0 = 5$, i.e., the class of $\ell = 0$ is overparametrized, the misclassification error first decreases and then increases, and finally decreases again, as a function of $n_1/n_0$. 

We obtain rich insights on training with imbalanced data from Figure \ref{fig:lda_asym_risk_imbalanced123}. When the data imbalance is moderate, for example $n_1/n_0 \in [1, 3]$, training with imbalanced data can outperform the counterpart of using reduced balanced data as in {\it Behavior I} and {\it Behavior III}. Nonetheless, the improvement in {\it Behavior I} is only marginal. On the contrary, {\it Behavior II} indicates that downsampling the majority class improves the performance of LDA.

\begin{figure}[h]
    % \vspace{-0.3cm}
    \centering
    \includegraphics[width=0.9\textwidth]{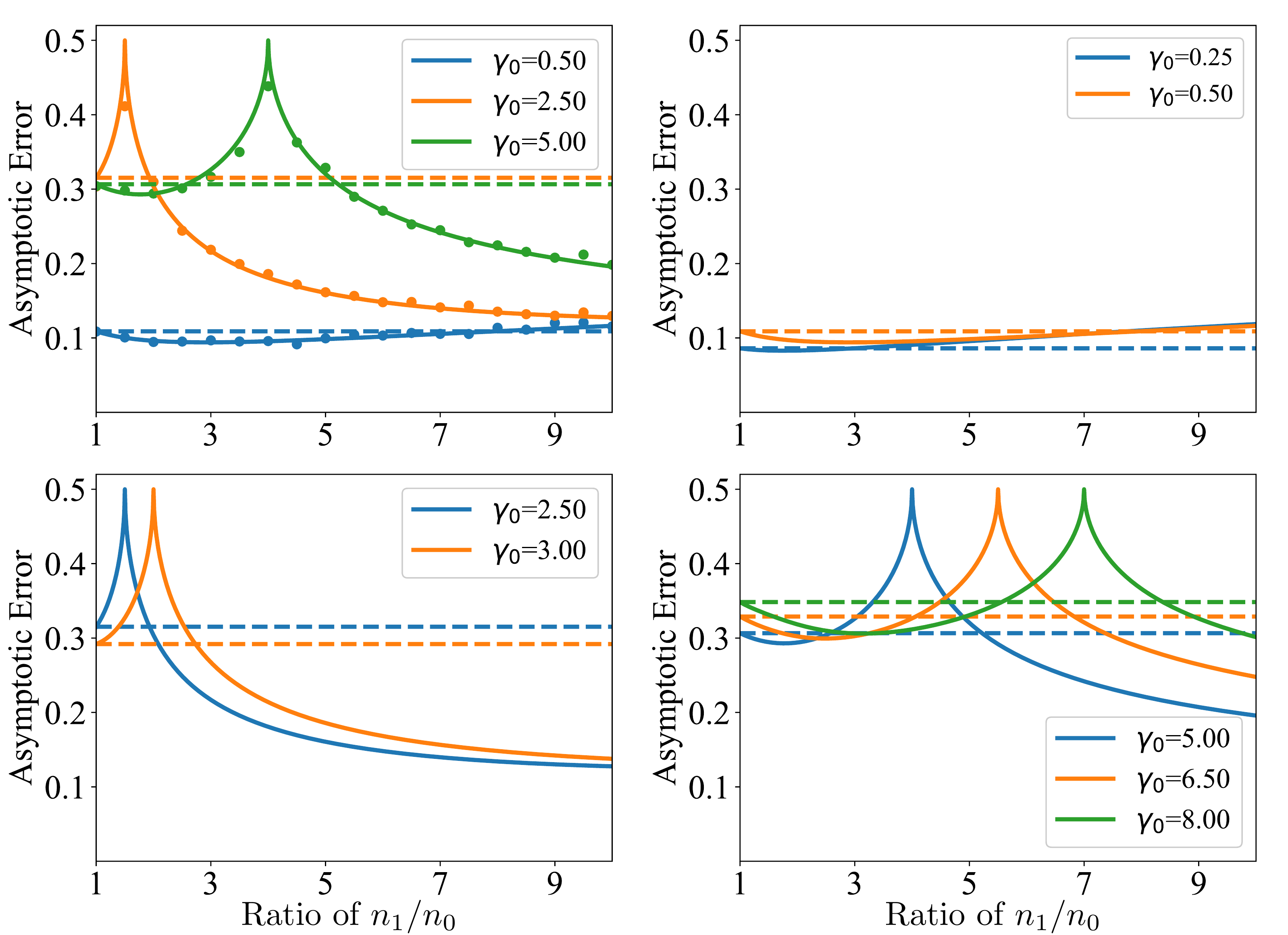}
    \caption{We demonstrate three behaviors of the misclassification error of LDA as a function of $n_1/n_0$ for various $\gamma_0$ and a fixed asymptotic SNR $\Delta^2=9$. The solid curve represents the error based on our theoretical analysis in Theorem \ref{theorem1}. The scatters denote synthetic data errors, with $n_0=40, p=\lceil \gamma_0n_0 \rceil$. Upper left panel: Plots of an underparametrized ($\gamma_0 =0.5$), a slightly overparametrized ($\gamma = 2.5$) and an overparametrized ($\gamma = 5$) case. Upper right panel: Behavior I in the underparametrized cases; Bottom Left panel: Behavior II in the slighly overparametrized cases. Bottom right panel: Behavior III in the overparametrized cases.}
    \label{fig:lda_asym_risk_imbalanced123}
    % \vspace{-0.4cm}
\end{figure}

As the data imbalance becomes more significant, for example, $n_1/n_0 \in [8, 10]$, {\it Behavior II} and {\it III} both indicate that downsampling the majority class hurts the performance. Such behavior is expected, since the downsampling incurs severe information loss.

Theorem \ref{theorem1} also characterizes the misclassification error when $n_1/n_0$ is extremely large. We can check that the misclassification error converges to $0.5$ as $n_1/n_0 \rightarrow \infty$, regardless of the value of $\gamma_0$ (see Appendix \ref{appendix:pfphasetransition}). This indicates that extreme label shift renders the trained classifier suffering from the model mismatch. Nonetheless, in such an extreme imbalanced case, the minority group is prone to be outliers, and detection of outliers is also of great interest.

In the sequel, we formally characterize three phases corresponding to the aforementioned different behaviors. We explicitly identify two phase transition knots $\gamma_a$ and $\gamma_b$ (derived in Appendix \ref{appendix:pfphasetransition}):
\begin{equation*}
% \label{eq:gammaaab}
    \gamma_a = 2 \quad \textrm{and} \quad \gamma_b = \frac{1}{8}(12-\Delta^2+\sqrt{\Delta^4+40\Delta^2+144}).
\end{equation*}
We claim three phases depending on the value of $\gamma_0$.

\noindent$\bullet$ {\bf Phase I}: If $\gamma_0\in(0,\gamma_a)$, the misclassification error has {\it Behavior I};

\noindent$\bullet$ {\bf Phase II}: If $\gamma_0\in(\gamma_a,\gamma_b)$, the misclassification error has {\it Behavior II};

\noindent$\bullet$ {\bf Phase III}: When $\gamma_0\in(\gamma_b,\gamma_c)$ for some $\gamma_c>0$,  the misclassification error has {\it Behavior III}.
%\end{description}

We observe that the first transition $\gamma_a$ appears at the exact parametrized case, i.e., $p = n$. The second transition $\gamma_b$ depends on the SNR and is always larger than $\gamma_a$. 

We remark that in {\bf Phase III}, we cut off $\gamma_0$ at some threshold $\gamma_c$. If $\gamma_0$ is extremely large, i.e., the problem is highly overparametried, we can observe a fourth behavior on the misclassification error. In fact, the misclassification error has multiple local maxima, reflecting a complex interaction between the limited information in the training data and the mismatch of the training and test model. We discuss the highly overparametried regime in Section \ref{sec:ConclusionAndDiscussion}.

\save{\section{Regularization Impact on LDA}}\label{sec:RegularizationImpactOnMisclassification}
In machine learning, regularization is commonly used to stabilize the computation and improve the generalization performance. 
In this section, we study regularized LDA \citep{friedman1989regularized,guo2007regularized} and analyze its asymptotic misclassification error. 

\save{\subsection{Error Analysis of Regularized LDA}}\label{sec:rldaerror}
When an $\ell_2$ regularization term is added on $\beta$, we consider the following optimization problem based on \eqref{eqmaxfisher}:
\begin{equation*}
% \label{eq:RLDA optimization}
    \argmin_{\beta \in \RR^p}~ -(\beta^\top \mu_1- \beta^\top \mu_0)^2 + \lambda \norm{\beta}_2^2, \ \  \textrm{s.t.} \ \ \beta^\top  \Sigma \beta = 1.
\end{equation*}
This gives rise to an optimal solution $\beta^*_\lambda$. For simplicity, we formalize it in an equivalent form as
\begin{equation}\begin{split}
    \beta^*_\lambda = (\Sigma + \lambda I )^{-1} \big({\mu}_0-{\mu}_1\big).
\end{split}\end{equation}
We denote the regularized Fisher linear discriminant classifier by $f_{\alpha^*,\beta_\lambda^*}^{b^*}$ where $\alpha^*$ and $b^*$ are given in \eqref{eq:alphabetastar}.

The empirical counterpart of the regularized LDA is given by $f_{\hat\alpha,\hat\beta_\lambda}^{\hat b}$, where the empirical parameters $\hat\alpha$, $\hat\beta_\lambda$ and $\hat b$ are computed through $\muhat_0, \muhat_1$ and $\Sigmahat$ according to \eqref{eq:estimator} and
$$
   \hat \beta_\lambda = (\Sigmahat + \lambda I )^{-1} \big({\muhat}_0-{\muhat}_1\big).
$$

Similar to Theorem \ref{theorem1}, we prove an asymptotic behavior of the misclassification error of the regularized LDA.

\begin{theorem}\label{theorem2regularized}
Let $\gamma_0,\gamma_1$ and $\Delta^2$ be positive constants and set $\gamma=\frac{\gamma_0\gamma_1}{\gamma_0+\gamma_1}$.
Under Assumption \ref{asm:training data and test data}, we let $n,p\to\infty$ with
\begin{equation*}
    p/n_0 \to \gamma_0,\  p/n_1 \to \gamma_1,\  p/n \to \gamma, \text{ and } \norm{\beta^*}_\Sigma^2 \to \Delta^2.
\end{equation*} The misclassification error $\cR(f^{\hat{b}}_{\alphahat, \widehat{ \beta}_\lambda})$
converges to a limit almost surely when $\gamma \in (0,1) \cup (1,\infty)$, i.e.,
\begin{equation*}
\cR(f^{\hat{b}}_{\alphahat, \widehat{ \beta}_\lambda}) \overset{\rm a.s.}{\rightarrow}\sum_{\ell=0,1} \pi_\ell  \Phi\left(\frac{g(\gamma_0, \gamma_1, \ell)m(-\lambda)+(-1)^\ell\ln{\frac{\gamma_0}{\gamma_1}}}{k(\gamma_0, \gamma_1)\sqrt{m'(-\lambda)}}\right),
\end{equation*}
where $g(\gamma_0, \gamma_1, \ell)$ and $k(\gamma_0,\gamma_1)$ are defined in Theorem \ref{theorem1}, and $m(\lambda) = \int 1/(s - \lambda) dF_\lambda(s)$ with $F_\lambda$ denoting the Marchenko-Pastur law.
\end{theorem}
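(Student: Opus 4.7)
The plan is to mirror the proof of Theorem \ref{theorem1}, replacing the pseudo-inverse $\widehat\Sigma^\dagger$ with the resolvent $R := (\widehat\Sigma + \lambda I)^{-1}$. A key benefit is that $\|R\|_{\mathrm{op}} \le 1/\lambda$ regardless of whether $\gamma$ lies above or below $1$, so the argument no longer needs to split into the two cases of Theorem \ref{theorem1}. First I would condition on the training data: since $x \mid (y = \ell) \sim \cN(\mu_\ell, \Sigma)$, the decision statistic $\widehat\beta_\lambda^\top(x - \widehat\alpha)$ is a univariate Gaussian, and therefore
\[
\cR\bigl(f^{\hat b}_{\alphahat, \widehat\beta_\lambda}\bigr) = \sum_{\ell = 0, 1} \pi_\ell \, \Phi\!\left( \frac{(-1)^\ell \bigl(\hat b - \widehat\beta_\lambda^\top (\mu_\ell - \widehat\alpha)\bigr)}{\|\widehat\beta_\lambda\|_\Sigma} \right).
\]
The problem thus reduces to computing the almost sure limits of the signal $\widehat\beta_\lambda^\top(\mu_\ell - \widehat\alpha)$, the noise scale $\|\widehat\beta_\lambda\|_\Sigma^2$, and the threshold $\hat b = \ln(n_1/n_0) \arrowas \ln(\gamma_0/\gamma_1)$.

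Next I would decompose $\widehat\mu_\ell = \mu_\ell + z_\ell$ with $z_\ell \sim \cN(0, \Sigma/n_\ell)$ and exploit the fact that, for Gaussian data, $z_0, z_1$ are jointly independent of $\widehat\Sigma$ (and of each other). With $u := \mu_0 - \mu_1$ and $\widehat u := u + z_0 - z_1$ we have $\widehat\beta_\lambda = R \widehat u$, and
\[
\widehat\beta_\lambda^\top(\mu_\ell - \widehat\alpha) = \tfrac{(-1)^\ell}{2}\, \widehat u^\top R u - \tfrac{1}{2}\, \widehat u^\top R(z_0 + z_1), \qquad \|\widehat\beta_\lambda\|_\Sigma^2 = \widehat u^\top R \Sigma R \, \widehat u.
\]
Expanding $\widehat u$ gives a bounded collection of building blocks: deterministic forms $u^\top R u$ and $u^\top R \Sigma R u$, mixed forms $u^\top R z_k$ and $u^\top R \Sigma R z_k$, and pure-noise forms $z_j^\top R z_k$ and $z_j^\top R \Sigma R z_k$. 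All mixed terms and all noise forms with $j \ne k$ have zero conditional mean given $R$ and conditional variance of order $1/n$, so they vanish a.s.; the diagonal noise forms concentrate on $\mathrm{tr}(\Sigma R)/n_\ell$ and $\mathrm{tr}(\Sigma R \Sigma R)/n_\ell$ by a Hanson--Wright estimate.

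The final step invokes the Marchenko--Pastur convergence of the empirical spectral distribution of $\widehat\Sigma$ to give $\tfrac{1}{p}\mathrm{tr}(R) \arrowas m(-\lambda)$ and $\tfrac{1}{p}\mathrm{tr}(R \Sigma R) \arrowas m'(-\lambda)$, with the zero-atom in the overparametrized case automatically absorbed into $m(-\lambda)$. Together with the deterministic-form limits $u^\top R u \arrowas \Delta^2 m(-\lambda)$ and $u^\top R \Sigma R u \arrowas \Delta^2 m'(-\lambda)$, this gives $\widehat u^\top R u \arrowas \Delta^2 m(-\lambda)$, $\widehat u^\top R(z_0 + z_1) \arrowas (\gamma_0 - \gamma_1)\, m(-\lambda)$, and $\|\widehat\beta_\lambda\|_\Sigma^2 \arrowas (\Delta^2 + \gamma_0 + \gamma_1)\, m'(-\lambda) = k(\gamma_0, \gamma_1)^2 \, m'(-\lambda)$. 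Substituting into the Gaussian CDF representation and applying the continuous mapping theorem produces the advertised formula. The main obstacle is controlling the deterministic forms $u^\top R u$ and $u^\top R \Sigma R u$ when $\Sigma \ne I$, since $R$ and $\Sigma$ need not commute and the MP law describes the spectrum of $\widehat\Sigma$ only in the standardized frame; one handles this either by the change of variables $\widetilde x := \Sigma^{-1/2} x$ (so that the ordinary MP law applies directly and the only surviving dependence on $\Sigma$ is through $\Delta^2$), or by invoking anisotropic deterministic equivalents derived from the Silverstein fixed-point equation. The appearance of the unadorned MP Stieltjes transform $m$ in the theorem's statement strongly suggests that the standardized reduction is the route used.
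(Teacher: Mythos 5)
Your overall strategy matches the paper's: reduce $\cR(f^{\hat b}_{\alphahat,\widehat\beta_\lambda})$ to $\sum_\ell\pi_\ell\Phi(q_\ell)$ by conditioning on the training set, standardize via $\Sigma^{1/2}$, expand the quadratic forms in $\mu_d,z_0,z_1$, kill the cross terms, concentrate the diagonal terms on normalized traces of the resolvent, and identify those traces with $m(-\lambda)$ and $m'(-\lambda)$. One local difference in technique: you propose using a Hanson--Wright estimate to concentrate $z_\ell^\top R z_\ell$ on $\tfrac{1}{p}\tr(R)$, whereas the paper applies its Wishart isotropicity lemma (Lemma~\ref{lemma3:isotropicity}) to obtain the exact distributional equality $z^\top\bigl(\tfrac{Z^\top Z}{n-2}+\lambda I\bigr)^{\dagger}z \eqd \tfrac{1}{p}\|z\|_2^2\,\tr\bigl(\tfrac{Z^\top Z}{n-2}+\lambda I\bigr)^{\dagger}$ and then sends $\|z\|_2^2/p\to 1$ by the SLLN. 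The two routes are interchangeable here; the paper's avoids any explicit tail bound.

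The subtlety you flag about $\Sigma\neq I$ is genuine, but your preferred resolution---the standardization $\tilde x=\Sigma^{-1/2}x$---does not actually close the gap for regularized LDA the way it does for Theorem~\ref{theorem1}. After the change of variables one gets
\begin{equation*}
\Sigma^{1/2}\bigl(\widehat\Sigma+\lambda I\bigr)^{-1}\Sigma^{1/2}=\Bigl(\tfrac{Z^\top Z}{n-2}+\lambda\Sigma^{-1}\Bigr)^{-1},
\end{equation*}
not $\bigl(\tfrac{Z^\top Z}{n-2}+\lambda I\bigr)^{-1}$: the ridge term is distorted to $\lambda\Sigma^{-1}$, so the white MP Stieltjes transform $m$ does not appear unless $\Sigma=I$ (or the ridge is re-weighted by $\Sigma^{-1}$). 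The paper's own proof of Theorem~\ref{theorem2regularized} makes this same silent substitution---it writes the transformed resolvent directly as $\bigl(\tfrac{Z^\top Z}{n-2}+\lambda I_p\bigr)^{\dagger}$---so it is implicitly operating with $\Sigma=I$; the pseudo-inverse identity that made the $\Sigma^{1/2}$'s cancel cleanly in Theorem~\ref{theorem1} does not survive the addition of $\lambda I$. For general $\Sigma$ the anisotropic deterministic-equivalent approach you mention (Silverstein's fixed-point equation) is what one would actually need, and the $m(-\lambda)$, $m'(-\lambda)$ in the statement would have to be replaced accordingly. So your reading of which route the paper takes is right, but you should recognize that the standardized reduction is only exact under the implicit $\Sigma=I$ assumption rather than being a general simplification.
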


Theorem \ref{theorem2regularized} is proved in Appendix \ref{appendixtheorem2regularized}. 
Theorem \ref{theorem2regularized} implies that regularization has a smoothing effect on the peaking phenomenon. For simplicity, we show this smoothing effect for balanced training ($\gamma_0=\gamma_1$) and test data ($\pi_0 = \pi_1 = 0.5$).

Figure \ref{fig:rlda_asym_risk_lam} shows the asymptotic misclassification error of regularized LDA as a function of $\gamma$. 
When the regularization is weak, e.g., $\lambda = 10^{-4}$, we observe a similar peaking phenomenon as in Figure \ref{fig:limiting_risk}, while the peak is lower than that in Figure \ref{fig:limiting_risk}. Compared to the unregularized classifier $f^{\hat b}_{\hat \alpha, \hat \beta}$, regularization improves the conditioning of the estimated covariance matrix, i.e., $\hat{\Sigma} + \lambda I$ is never singular, which in turn mitigates the performance degradation when $\gamma \approx 1$.

\begin{figure}[htb!]
    \centering
    \includegraphics[width=0.9\textwidth]{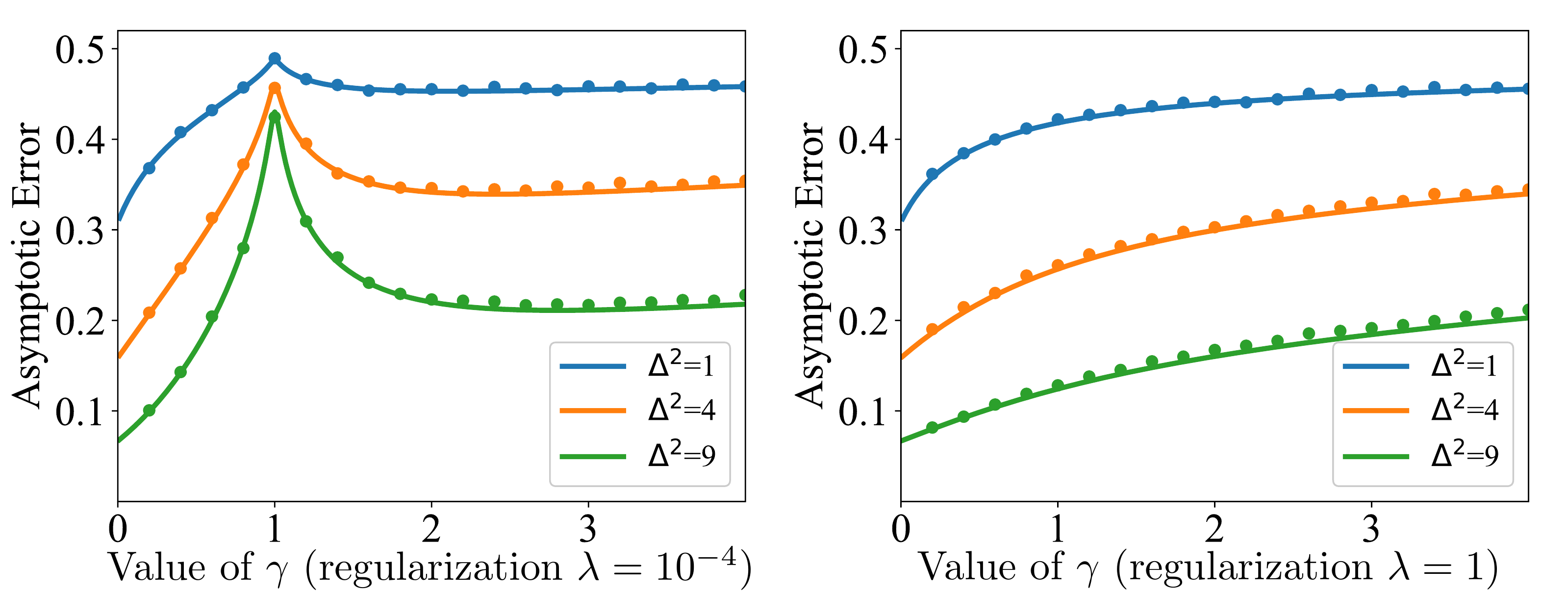}
    \caption{Misclassification error of regularized LDA as a function of $\gamma$ for various levels of asymptotic SNR. The left panel shows the impact of weak regularization $\lambda = 10^{-4}$, and the right panel uses a strong regularization $\lambda = 1$. The solid line represents the theoretical error given by Theorem \ref{theorem2regularized}, and the scatters denote synthetic data errors, with $n=200,p=\lceil \gamma n \rceil$.}
    \label{fig:rlda_asym_risk_lam}
\end{figure}

\begin{figure}[htb!]
\begin{minipage}[c]{\textwidth}
    \centering
        \includegraphics[width=0.45\textwidth]{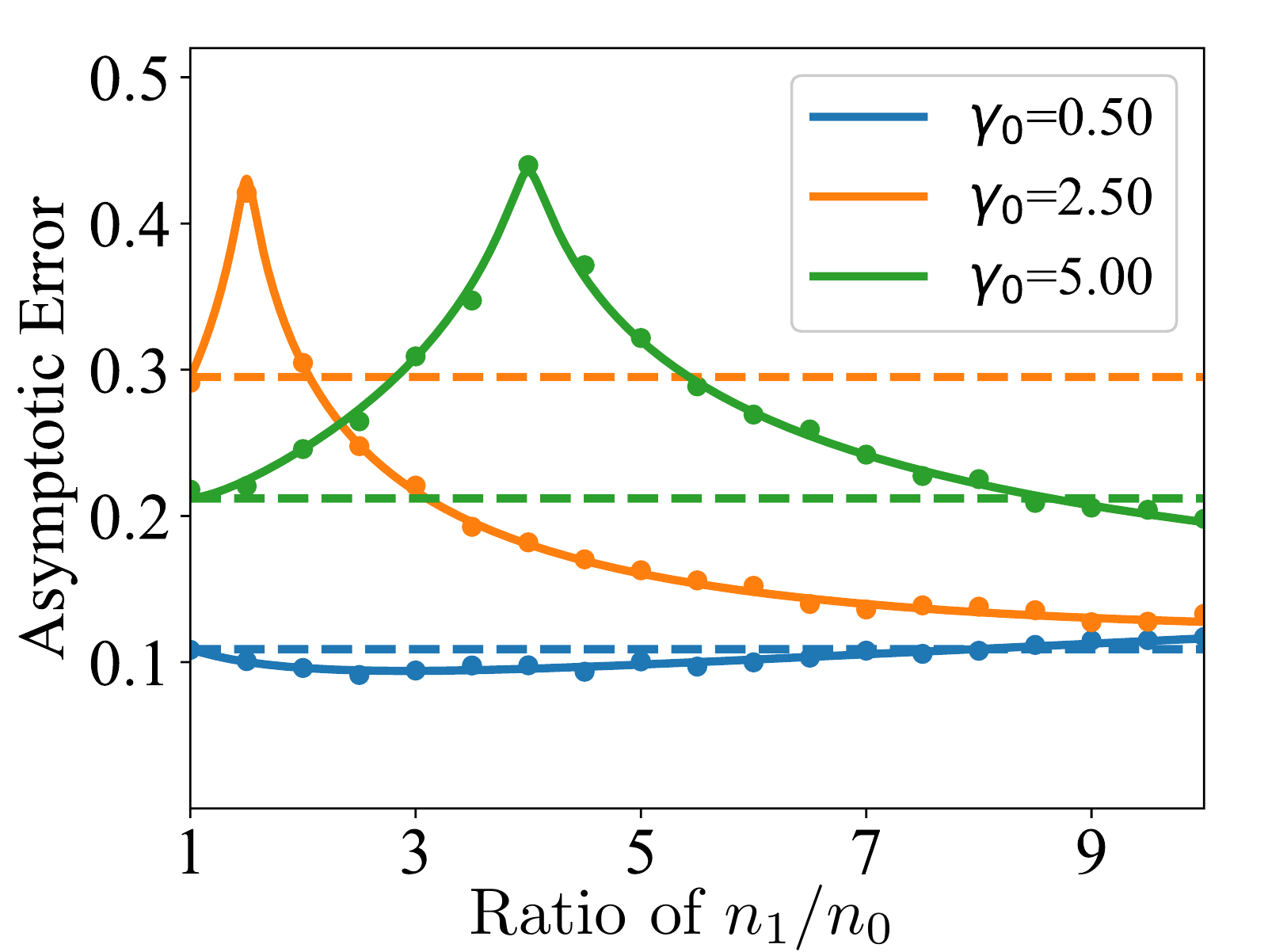}
        \includegraphics[width=0.45\textwidth]{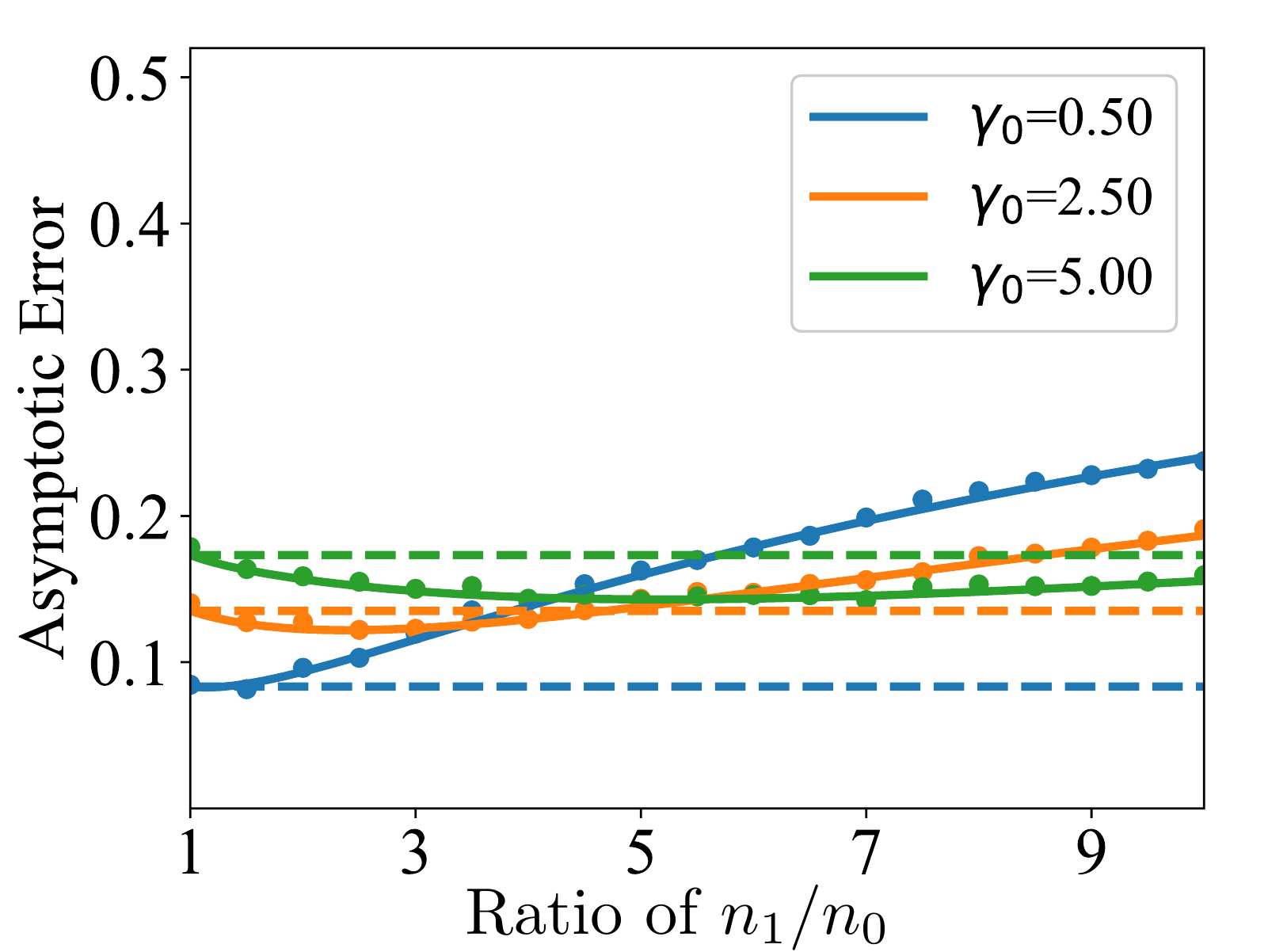}
\end{minipage}

    \caption{We demonstrate three behaviors of the misclassification error of regularized LDA as a function of $n_1/n_0$ for various $\gamma_0$ when we use a weak regularization with $\lambda = 10^{-4}$ (left panel). They disappears with a strong regularization $\lambda = 1$ (right panel). We fix the asymptotic SNR $\Delta^2=9$. The scatters denote synthetic data errors, with $n_0=40, p=\lceil \gamma_0n_0 \rceil$.}
    \label{fig:rlda_asym_risk_imbalanced123}
% \vspace{-0.22in}
    
\end{figure}

When the regularization is strong, e.g., $\lambda = 1$, the peaking phenomenon disappears in Figure \ref{fig:rlda_asym_risk_lam} (right panel). In this case, the matrix $\hat{\Sigma}+\lambda I$ is always well-conditioned. The error is dominated by the variance in the statistical estimation of the means and the covariance matrix. As a result, the error increases as $\gamma$ increases. We remark that proper regularization greatly reduces the misclassification error when the problem is approximately exactly parametrized ($\gamma \approx 1$).

\save{\subsection{Phase Transition of Regularized LDA}}\label{sec:rldaphase}

In this section, we study the impact of the regularization on the phase transition phenomenon discussed in Section \ref{sec:phaselda}. We discuss the impact of weak and strong regularization separately, as they lead to very different behaviors.

When the regularization is weak, e.g., $\lambda = 10^{-4}$, we observe a similar phase transition phenomenon as in Section \ref{sec:phaselda}. We depict the misclassification error curves as a function of $n_1/n_0$ in Figure \ref{fig:rlda_asym_risk_imbalanced123}.

When the regularization is strong, e.g., $\lambda = 1$, the phase transition phenomenon disappears (See a formal justification in Appendix \ref{appendix:rldaphasetransition}). Figure \ref{fig:rlda_asym_risk_imbalanced123} shows that the asymptotic misclassification error of regularized LDA as a function of $n_1/n_0$ for various $\gamma_0$. We observe that the error curve consistently first decreases and then increases. In this case, the matrix $\hat{\Sigma}+\lambda I$ is always well-conditioned. Therefore the misclassification error is the consequence of the trade-off between two factors: 1) the model mismatch and 2) the variance in the statistical estimation of the means and the covariance matrix.

\save{\section{Real-Data Binary Classification}}\label{sec:Experiments}

We connect our theoretical findings to real-data binary classification tasks. We consider Neural Network (NN) classifiers for the MNIST (CC-BY 3.0) and CIFAR-10 (MIT) datasets \citep{lecun1998gradient, krizhevsky2009learning}. We focus on the overparametried regime, which is the working regime for neural networks.

\textbf{MNIST dataset and NN classifier}. The MNIST dataset consists of handwritten digits of resolution $28 \times 28$. We train a  neural network classifier with one hidden layer to distinguish digits $3$ and $8$. We vary the number of hidden units in $\{32, 64, 128\}$. The activation function is ReLU, i.e., $\sigma(\cdot) = \max\{\cdot, 0\}$. We use Adam \citep{kingma2014adam} for training, with default hyperparameters in Pytorch.

\textbf{CIFAR-10 dataset and NN classifier}. The CIFAR-10 dataset consists of RGB images of resolution $32 \times 32$ from $10$ categories. We pick two similar categories, e.g., horse v.s. deer, for binary classification. We downsample the data to a resolution of $16 \times 16$. We also train an NN classifier with one hidden layer. The number of neurons in the hidden layer is $10$, and the activation function is ReLU. We use momentum SGD for training, with momentum coefficient $0.9$ and learning rate $0.001$.

\begin{figure}[htb!]
    \centering
    \includegraphics[width=0.9\textwidth]{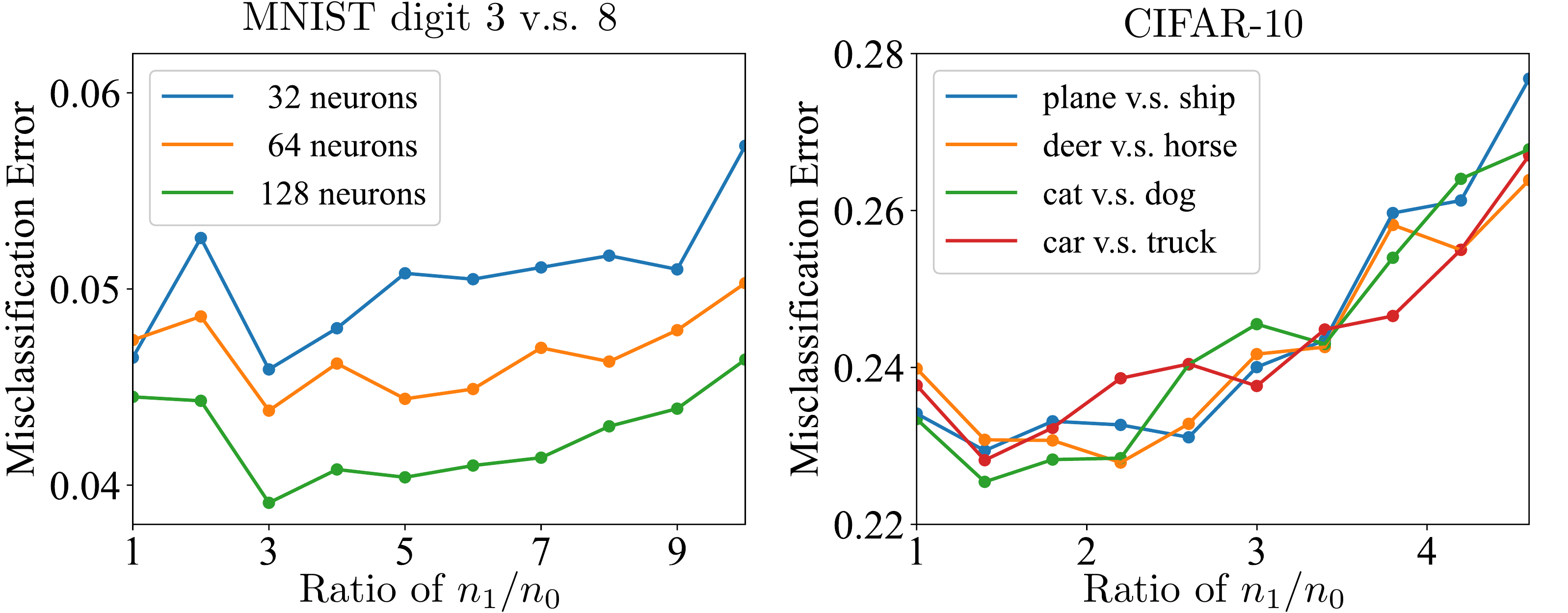}
    \caption{Misclassification error of neural network classifiers on MNIST and CIFAR-10 for binary classification.}
    \label{fig:realdata}
\end{figure}
In both tasks, during training, we fix the number of samples from one category ($500$ in MNIST and $1000$ in CIFAR-10), and vary the samples in the other category. The total number of training epochs is $50$ for MNIST and $20$ for CIFAR-10. After training, the classifier is tested on a balanced test set, which consists of $900$ samples from each class in MNIST, and $1000$ samples per-class in CIFAR-10. The test error is averaged over $10$ independent runs for MNIST and $5$ independent runs for CIFAR-10 with random seeds.

\textbf{Result}. The misclassification error in MNIST and CIFAR-10 as a function of $n_1/n_0$ is plotted in Figure \ref{fig:realdata}. In both the MNIST and CIFAR-10 experiments, the neural network is overparametrized. Therefore, we expect the misclassification error exhibits {\it Behavior III} in Figure \ref{fig:lda_asym_risk_imbalanced123}. This is corroborated in Figure \ref{fig:realdata} as the misclassification error first decreases and then increases as $n_1/n_0$ grows.

More importantly, these experiments consistently indicate that in the overparametrized regime, downsampling the majority class may hurt the model performance (cf. $n_1/n_0 \approx 3$ in MNIST and $n_1/n_0 \approx 1.5$ in CIFAR-10). Meanwhile, when the data imbalance is relatively severe, downsampling the majority class can be beneficial, due to its mitigation on label shift between training and testing.

\save{\section{Proof of Theorem \ref{theorem1}}} \label{sec:proof}

In this Section, we prove our main Theorem \ref{theorem1}. The proof for Theorem \ref{theorem2regularized} about the regularized LDA is similar to the proof of Theorem \ref{theorem1}. The proof of Theorem \ref{theorem2regularized} is given in Appendix \ref{appendixtheorem2regularized}.

\subsection{Lemmas to be used for the proof of Theorem \ref{theorem1}}
\label{subsec:lemma}
Our analysis relies on change of variables to exploit the independence between the sample mean estimator and the covariance estimator.

\begin{lemma}[Independence between sample mean and sample covariance]\label{lemma:independent} 
Let $x_i \overset{\rm i.i.d.}{\sim} \cN(\mu, \Sigma)$ for $i = 1, \dots, n$, be samples from a Gaussian distribution. We denote $\hat{\mu} = \frac{1}{n} \sum_{i=1}^n x_i$ as the estimator of the sample mean, and $\hat{\Sigma} = \frac{1}{n-1} \sum_{i=1}^n (x_i - \hat{\mu})(x_i - \hat{\mu})^\top$ as the estimator of the sample covariance. Then $\hat{\mu}$ and $\hat{\Sigma}$ are independent.
\end{lemma}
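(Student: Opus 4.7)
My plan is to reduce the claim to the standard fact that, for jointly Gaussian random vectors, vanishing cross-covariance implies independence. Since $\hat{\Sigma}$ is a deterministic function of the centered samples $d_i := x_i - \hat{\mu}$, it suffices to prove that $\hat{\mu}$ is independent of the entire tuple $(d_1, \dots, d_n)$, and then transfer the conclusion to $\hat{\Sigma}$ by a measurability argument.

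First, I would stack the samples into $X = (x_1^\top, \dots, x_n^\top)^\top \in \RR^{np}$, which is Gaussian by independence of the $x_i$. Both $\hat{\mu} = \frac{1}{n}(I_p,\dots,I_p) X$ and each $d_i = x_i - \hat{\mu}$ are linear images of $X$, so the concatenated vector $(\hat{\mu}, d_1, \dots, d_n) \in \RR^{(n+1)p}$ is jointly Gaussian. Next, a short direct computation using $\mathrm{Cov}(x_i, x_j) = \delta_{ij}\Sigma$ gives
\[
\mathrm{Cov}(\hat{\mu}, d_i) \;=\; \mathrm{Cov}\!\left(\tfrac{1}{n}\sum_{j=1}^n x_j,\ x_i - \tfrac{1}{n}\sum_{k=1}^n x_k\right) \;=\; \tfrac{1}{n}\Sigma \;-\; \tfrac{1}{n}\Sigma \;=\; 0
\]
for every $i \in \{1,\dots,n\}$. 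By joint Gaussianity, zero covariance upgrades to independence of $\hat{\mu}$ from each $d_i$, and in fact from the full block $(d_1,\dots,d_n)$ (since that block is itself jointly Gaussian with $\hat{\mu}$).

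Finally, because $\hat{\Sigma} = \frac{1}{n-1}\sum_{i=1}^n d_i d_i^\top$ is a measurable function of $(d_1,\dots,d_n)$, the standard fact that measurable functions preserve independence yields independence of $\hat{\mu}$ and $\hat{\Sigma}$. The argument is essentially routine; the only points requiring care are that Gaussianity is invoked twice — once to secure joint normality of $(\hat{\mu}, d_1,\dots,d_n)$, and once more to pass from uncorrelatedness to independence — and that no invertibility assumption on $\Sigma$ is needed, which is consistent with the overparametrized regime considered elsewhere in the paper.
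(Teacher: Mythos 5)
Your proof is correct, but it takes a genuinely different route from the paper's. The paper argues via Basu's theorem: it shows $\hat{\mu}$ is a complete and sufficient statistic for $\mu$, observes that $\hat{\Sigma}$ is ancillary (its distribution does not depend on $\mu$), and invokes Basu's theorem to conclude independence. You instead use the direct Gaussian argument: stack the data, note that $\hat{\mu}$ and the residuals $d_i = x_i - \hat{\mu}$ are linear images of the same Gaussian vector and hence jointly Gaussian, verify $\mathrm{Cov}(\hat{\mu}, d_i) = 0$ by a one-line computation, upgrade to independence of $\hat{\mu}$ from the block $(d_1,\dots,d_n)$ via joint normality, and finish by measurability of $\hat{\Sigma}$ in the $d_i$. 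Both are standard textbook proofs. Your route is more elementary — it uses only linearity, Gaussianity, and the zero-covariance-implies-independence principle — and it is arguably the more airtight of the two as written, since the paper's verification of completeness ("taking derivatives of the expectation identity w.r.t.\ $\mu$ recursively") is an informal sketch rather than a rigorous argument. The Basu-theorem route is attractive for its generality (it works in any exponential family where the location parameter admits a complete sufficient statistic and the scale estimator is ancillary), but for this specific Gaussian setting your direct computation is shorter, cleaner, and makes the role of normality fully explicit: once to get joint Gaussianity of $(\hat{\mu}, d_1,\dots,d_n)$, once to convert uncorrelatedness into independence. Your remark that no invertibility of $\Sigma$ is needed is also a nice observation consistent with the overparametrized regime the paper cares about.
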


We also utilize the asymptotic characterization of the spectrum of Wishart matrix.
\begin{lemma}[Isotropicity of wishart matrix]\label{lemma3:isotropicity}
Assume $\frac{Z^\top Z}{n-2} \sim \mathcal{W}(I_p , n-2)$. For any vector $z \in \RR^p$ independent of $Z$, we have
\begin{align}\label{apeq:wishart}
{z}^\top \left(\frac{Z^\top Z}{n-2} \right)^{\dagger}{z} \overset{d}{=} \frac{1}{p}\norm{z}^2_2 \tr\left(\left(\frac{Z^\top Z}{n-2}\right)^\dagger\right).
\end{align}
\end{lemma}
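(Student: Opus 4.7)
\textbf{Proof plan for Lemma \ref{lemma3:isotropicity}.} The plan is to exploit the orthogonal invariance of the isotropic Wishart distribution. Because the rows of $Z$ are i.i.d.\ $\cN(0, I_p)$, for every orthogonal $Q \in O(p)$ we have $ZQ^\top \eqd Z$, and consequently $Q W Q^\top \eqd W$, where $W := Z^\top Z/(n-2)$. Since the Moore--Penrose pseudo-inverse commutes with orthogonal conjugation, $(QWQ^\top)^\dagger = QW^\dagger Q^\top$, and this invariance transfers to $W^\dagger$.

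First, I would condition on $z$ (legitimate since $z$ is independent of $Z$) and pick an orthogonal matrix $Q=Q(z)$ sending $z/\|z\|_2$ to $e_1$. Using the conditional invariance $QW^\dagger Q^\top \eqd W^\dagger$,
$$
    z^\top W^\dagger z \;=\; \|z\|_2^2\, e_1^\top (QW^\dagger Q^\top) e_1 \;\eqd\; \|z\|_2^2\, (W^\dagger)_{11}.
$$
Next, writing the spectral decomposition $W^\dagger = U\Lambda U^\top$ in which $U$ is Haar-distributed on $O(p)$ and independent of the diagonal eigenvalue matrix $\Lambda$ (a structural consequence of the orthogonal invariance above), I set $\xi := U^\top e_1$, which is uniform on the unit sphere $S^{p-1}$ and independent of $\Lambda$. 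Then
$$
    (W^\dagger)_{11} \;=\; \xi^\top \Lambda \xi \;=\; \sum_{i=1}^p \lambda_i \xi_i^2,
$$
and since $\mathbb{E}[\xi_i^2] = 1/p$, we get $\mathbb{E}\bigl[(W^\dagger)_{11} \mid \Lambda\bigr] = \tfrac{1}{p} \tr(W^\dagger)$.

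The main obstacle is closing the gap between $(W^\dagger)_{11}$ and $\tfrac{1}{p}\tr(W^\dagger)$: as a finite-$p$ distributional equality this is not exact, since the left-hand side carries extra randomness from the sphere-valued $\xi$. The identity must therefore be read in the asymptotic sense underlying the whole paper ($p, n_\ell \to \infty$ with $p/n_\ell \to \gamma_\ell$). Rigorously, I would close this gap via a Bai--Silverstein-type quadratic-form lemma: for an isotropic unit vector $\xi$ independent of a symmetric matrix $A$, $\mathrm{Var}(\xi^\top A \xi \mid A) \le 2\,\tr(A^2)/p^2$, so combined with the almost sure boundedness of $\tfrac{1}{p}\tr(W^\dagger)$ and $\tfrac{1}{p}\tr((W^\dagger)^2)$ (which follow from the Marchenko--Pastur limit whenever $\gamma \neq 1$), one obtains $(W^\dagger)_{11} - \tfrac{1}{p}\tr(W^\dagger) \arrowas 0$, yielding the claimed distributional identity in the proportional asymptotic regime.
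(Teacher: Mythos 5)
Your first step---conditioning on $z$ and using the orthogonal invariance of the Wishart ensemble to reduce $z^\top W^\dagger z$ to $\|z\|_2^2\,(W^\dagger)_{11}$ with $W = Z^\top Z/(n-2)$---is exactly how the paper's proof begins. After that you diverge, and your route is arguably the more defensible one. The paper finishes by rotating $z$ onto each basis vector $e_i$, obtaining $z^\top W^\dagger z \eqd \|z\|_2^2 (W^\dagger)_{ii}$ for every $i$, and then ``averaging over the index $i$'' to assert the exact identity with $\frac{1}{p}\|z\|_2^2\,\tr(W^\dagger)$; as you correctly observe, this last inference is not valid at finite $p$ (equality in distribution with each diagonal entry does not yield equality in distribution with their average: $(W^\dagger)_{11}$ has the right mean but strictly larger fluctuations than $\frac{1}{p}\tr(W^\dagger)$), so the lemma as literally stated should be read asymptotically. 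Your replacement---writing $(W^\dagger)_{11} = \xi^\top \Lambda\, \xi$ with $\xi$ uniform on the sphere and independent of the spectrum, then controlling $\xi^\top \Lambda \xi - \frac{1}{p}\tr(W^\dagger)$ via a Bai--Silverstein-type quadratic-form bound together with the Marchenko--Pastur control of $\frac{1}{p}\tr(W^\dagger)$ and $\frac{1}{p}\tr((W^\dagger)^2)$ for $\gamma \neq 1$---delivers precisely what the proof of Theorem \ref{theorem1} actually needs, namely that the quadratic form and its normalized-trace surrogate share the same almost-sure limit, and in doing so it repairs the averaging step in the paper's own argument. One technical caveat: the conditional variance bound of order $\tr((W^\dagger)^2)/p^2 = O(1/p)$ only gives convergence in probability; to get the almost-sure statement you claim, invoke the fourth-moment (or higher) version of the quadratic-form lemma, which yields a summable $O(p^{-2})$ bound so that Borel--Cantelli closes the argument.
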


Lemma \ref{lemma:independent} and \ref{lemma3:isotropicity} are proved in Appendix \ref{appendix:pflma}.
We next present some helper lemmas.

\begin{lemma}\label{lemma:trace}
Given a matrix $Z$ with i.i.d. standard normal distributed entries, we have 
\begin{align}\label{apeq:trace}
\tr\left(\left(Z^\top Z\right)^\dagger\right) =\tr\left(\left(ZZ^\top \right)^\dagger\right).
\end{align}
\end{lemma}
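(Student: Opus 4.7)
The plan is to observe that this identity is purely algebraic and does not rely on the i.i.d.\ standard normal structure of $Z$; it is a direct consequence of the singular value decomposition and the elementary fact that $Z^\top Z$ and $ZZ^\top$ share the same nonzero eigenvalues.

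First I would write the SVD $Z = U D V^\top$, where $U \in \RR^{n\times n}$ and $V \in \RR^{p\times p}$ are orthogonal and $D \in \RR^{n\times p}$ carries the singular values $\sigma_1,\ldots,\sigma_r$ on its main diagonal with $r = \mathrm{rank}(Z)$. This gives $Z^\top Z = V(D^\top D)V^\top$ and $ZZ^\top = U(DD^\top)U^\top$. Crucially, the diagonal matrices $D^\top D \in \RR^{p\times p}$ and $DD^\top \in \RR^{n\times n}$ have exactly the same nonzero entries $\sigma_1^2,\ldots,\sigma_r^2$, differing only in the number of trailing zeros padded on the diagonal.

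Next I would invoke two standard facts about the Moore--Penrose pseudo-inverse: (i) for an orthogonal matrix $O$ and symmetric $A$, one has $(OAO^\top)^\dagger = O A^\dagger O^\top$, which is verified by directly checking the four Penrose conditions; and (ii) the pseudo-inverse of a diagonal matrix is obtained by reciprocating the nonzero diagonal entries and leaving zeros in place. Applying these, $(Z^\top Z)^\dagger = V(D^\top D)^\dagger V^\top$ and $(ZZ^\top)^\dagger = U(DD^\top)^\dagger U^\top$. The cyclic invariance of the trace then collapses both sides to $\sum_{i=1}^{r}\sigma_i^{-2}$, which establishes \eqref{apeq:trace}.

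There is no substantive obstacle in this argument; the only point worth verifying carefully is the conjugation identity $(OAO^\top)^\dagger = O A^\dagger O^\top$, which follows in one line from the Penrose conditions using $O^\top O = O O^\top = I$. All randomness of $Z$ is irrelevant, so the identity in fact holds for every real matrix $Z$.
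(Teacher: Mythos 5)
Your proof is correct and takes essentially the same approach as the paper: reduce $\tr\big((Z^\top Z)^\dagger\big)$ to the sum of reciprocals of the nonzero eigenvalues via orthogonal diagonalization, pseudoinverse conjugation, and trace cyclicity, then observe that $Z^\top Z$ and $ZZ^\top$ share those nonzero eigenvalues. The only stylistic difference is that you start from a single SVD of $Z$, which makes the shared-spectrum fact automatic rather than a separately cited lemma, and you correctly observe that the Gaussianity of $Z$ plays no role --- a minor tidying of the paper's argument, which instead applies eigendecompositions to $Z^\top Z$ and $ZZ^\top$ one at a time.
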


\begin{lemma}[Strong law of large numbers]\label{lemma:SLLN}
Assume $z\sim\mathcal{N}(0,I_p)$ and $\mu_d$ is a non-random p-dimensional vector such that $\norm{\mu_d}_2^2 \overset{\rm a.s.}{\longrightarrow} \Delta^2$ and $n_\ell$ satisfies Binomial distribution $B(n_0+n_1,\pi_\ell)$ then we have
\begin{equation*}\begin{split}
    \frac{1}{\sqrt{p}}\mu_d^\top z \overset{\rm a.s.}{\longrightarrow}0,\quad
    \frac{1}{p}z^\top z \overset{\rm a.s.}{\longrightarrow} 1,\quad
    \frac{n_\ell}{n_0+n_1} \overset{\rm a.s.}{\longrightarrow} \pi_\ell.
\end{split}\end{equation*}
\end{lemma}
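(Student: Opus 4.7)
The plan is to establish each of the three almost-sure limits separately, since each reduces to a different standard tool in probability.

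For the first limit, $\frac{1}{\sqrt{p}}\mu_d^\top z \arrowas 0$, I would exploit that $\mu_d$ is deterministic and $z \sim \cN(0, I_p)$, so $\mu_d^\top z$ is a scalar centered Gaussian with variance $\|\mu_d\|_2^2$. The hypothesis $\|\mu_d\|_2^2 \to \Delta^2$ lets me bound $\|\mu_d\|_2^2 \leq 2\Delta^2$ for all sufficiently large $p$, which yields the Gaussian tail estimate
\[
\PP\!\left(\left|\tfrac{1}{\sqrt{p}}\mu_d^\top z\right| > \varepsilon\right) \;\leq\; 2\exp\!\left(-\tfrac{p\varepsilon^2}{4\Delta^2}\right).
\]
This is summable in $p$, so the Borel--Cantelli lemma delivers the almost sure convergence to $0$.

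For the second limit, $\frac{1}{p}z^\top z \arrowas 1$, I would simply write $z^\top z = \sum_{i=1}^p z_i^2$ as a sum of i.i.d.\ $\chi^2_1$ variables with unit mean, and invoke Kolmogorov's strong law of large numbers. (Alternatively, standard chi-square concentration combined with Borel--Cantelli gives the same conclusion, but this is overkill here.)

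For the third limit, $\frac{n_\ell}{n_0+n_1} \arrowas \pi_\ell$, I would realize $n_\ell = \sum_{i=1}^{n_0+n_1} \mathbf{1}\{y_i = \ell\}$ as a sum of i.i.d.\ Bernoulli indicators with parameter $\pi_\ell$, and again apply the SLLN as $n_0+n_1 \to \infty$.

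The only bookkeeping point worth flagging is that Theorem \ref{theorem1} uses the three limits simultaneously, so they must be realized on a common probability space indexed by $p$ (and by $n_0, n_1$). This is routine: via Kolmogorov's extension theorem one can couple the sequences $(z^{(p)})_p$ and $(y_i)_{i\geq 1}$ on a single space, whereupon the intersection of the three full-measure events remains full-measure. No real obstacle arises in this lemma; it is a standard warm-up whose role is to let the more substantive concentration arguments (Lemmas \ref{lemma:independent}--\ref{lemma:trace}) be combined cleanly in the proof of the main theorem.
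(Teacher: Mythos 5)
Your proposal is correct and follows essentially the same route as the paper: a Gaussian tail bound plus Borel--Cantelli for $\frac{1}{\sqrt{p}}\mu_d^\top z$, and the strong law of large numbers applied to the i.i.d.\ $\chi^2_1$ summands and Bernoulli indicators for the other two limits. The only cosmetic difference is the specific form of the Gaussian tail estimate (your sub-Gaussian bound versus the paper's Mills-ratio-type bound combined with $e^{-x}<1/x$), which does not change the argument.
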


\begin{lemma}[Marchenko-Pastur law]\label{lem.MPLaw}
Let $F_{\gamma}$ be the Marchenko-Pastur (MP) law of $0<\gamma<1$. Then for any real number $\zeta\leq 0$, the Stieltjes transform of MP law and its derivative at $\zeta$ are given as
\begin{align}
    m(\zeta)&=\int \frac{1}{s-\zeta}dF_{\gamma}(s)=\frac{1-\gamma-\zeta-\sqrt{(\zeta-\gamma-1)^2-4\gamma}}{2\gamma\zeta}
    \label{eq.MP.m}
\end{align}
and 
\begin{align}
    \frac{d}{d\zeta}m(\zeta)&=\int \frac{1}{(s-\zeta)^2}dF_{\gamma}(s) =\frac{2\gamma(\gamma-1)-\frac{2\gamma\left(\zeta(\gamma+1) -(\gamma-1)^2\right)}{\sqrt{(\zeta-\gamma-1)^2 -4\gamma}} }{4\gamma^2\zeta^2}.
    \label{eq.MP.dm}
\end{align}

In particular, we have
\begin{align}
    &m(0)=\frac{1}{1-\gamma}  \quad \mbox{ and } \quad \frac{d}{d\zeta}m(0)=\frac{1}{(1-\gamma)^3}. \nonumber
\end{align}
\end{lemma}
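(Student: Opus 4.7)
The plan is to reduce the computation of $m(\zeta)$ to the well-known self-consistent quadratic satisfied by the Stieltjes transform of the Marchenko-Pastur law, and then to obtain the derivative and the values at $\zeta=0$ by implicit differentiation together with a careful Taylor expansion around $\zeta=0$.

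First I would recall (or rederive by contour integration against the MP density $f_\gamma(s)=\frac{\sqrt{(\lambda_+-s)(s-\lambda_-)}}{2\pi\gamma s}$ supported on $[\lambda_-,\lambda_+]$ with $\lambda_\pm=(1\pm\sqrt{\gamma})^2$) that for every $\zeta$ off the support,
\begin{equation*}
\gamma\,\zeta\, m(\zeta)^2+(\zeta+\gamma-1)\,m(\zeta)+1=0.
\end{equation*}
Solving this quadratic in $m$ and choosing the branch compatible with the normalization $m(\zeta)\sim -1/\zeta$ as $\zeta\to-\infty$ (i.e., the minus sign in front of the square root) reproduces \eqref{eq.MP.m} verbatim; the discriminant simplifies to $(\zeta-\gamma-1)^2-4\gamma$ after expanding $(\zeta+\gamma-1)^2-4\gamma\zeta$. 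For \eqref{eq.MP.dm} I would implicitly differentiate the quadratic to obtain
\begin{equation*}
m'(\zeta)=-\,\frac{m(\zeta)\bigl(\gamma\, m(\zeta)+1\bigr)}{\zeta+\gamma-1+2\gamma\zeta\, m(\zeta)},
\end{equation*}
and then substitute the closed form for $m(\zeta)$ and simplify. Differentiation under the integral sign is legitimate because, for $0<\gamma<1$, the support of $F_\gamma$ is bounded away from every $\zeta\le 0$ (since $\lambda_->0$), so the integrand and its $\zeta$-derivative are uniformly bounded on the support.

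For the values at $\zeta=0$, the main subtlety is the apparent $0/0$ form that arises on the right-hand sides of both \eqref{eq.MP.m} and \eqref{eq.MP.dm}. I would Taylor-expand the radical,
\begin{equation*}
\sqrt{(\zeta-\gamma-1)^2-4\gamma}=\sqrt{(1-\gamma)^2-2(\gamma+1)\zeta+\zeta^2}=(1-\gamma)-\frac{(\gamma+1)\,\zeta}{1-\gamma}+O(\zeta^2),
\end{equation*}
and plug it into the numerator of \eqref{eq.MP.m}: the $O(1)$ terms cancel and the leading term becomes $\frac{2\gamma}{1-\gamma}\zeta$, which cancels the $2\gamma\zeta$ in the denominator to yield $m(0)=\frac{1}{1-\gamma}$. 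Rather than push this expansion to the next order for $m'(0)$, I would substitute $m(0)=1/(1-\gamma)$ into the implicit formula above: the numerator becomes $-\frac{1}{1-\gamma}\bigl(\frac{\gamma}{1-\gamma}+1\bigr)=-\frac{1}{(1-\gamma)^2}$ and the denominator becomes $\gamma-1$, so $m'(0)=\frac{1}{(1-\gamma)^3}$.

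There is no substantive obstacle: the lemma is a calculation against a known density. The only delicate steps are selecting the correct branch of the square root and tracking the cancellations that produce the $1/(1-\gamma)$ and $1/(1-\gamma)^3$ limits; once these are organized, the stated identities fall out directly.
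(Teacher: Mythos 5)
Your proposal is correct, and it takes a genuinely different route from the paper. The paper extends the closed-form Stieltjes transform from the complex upper half-plane to the negative real axis by a dominated-convergence argument (approximating $\zeta$ by a sequence $z_k$ with $\mathrm{Im}(z_k)>0$), cites \citet[Lemma 3.11]{Bai-and-Silverstein2010} for the formula on the upper half-plane, tracks the sign convention of the complex square root to resolve the branch, and then evaluates $m(0)$ and $m'(0)$ by L'Hospital's rule applied to the explicit algebraic expressions; it also spends separate effort verifying that differentiation under the integral sign is legitimate before applying it. You instead start from the self-consistent quadratic $\gamma\zeta m^2+(\zeta+\gamma-1)m+1=0$, pick the branch by the large-$\zeta$ asymptotics $m(\zeta)\sim -1/\zeta$, obtain $m'$ by implicit differentiation, and get the $\zeta=0$ values by a short Taylor expansion of the radical and substitution into the implicit formula. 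What each buys: the paper's route is careful about the passage from the upper half-plane to the real axis and is essentially a citation-plus-limit argument, robust but somewhat heavy (two applications of L'Hospital, one per quantity); yours is more self-contained algebraically and sidesteps L'Hospital entirely, at the cost of presupposing the self-consistent quadratic (and implicitly invoking analytic continuation to know that the quadratic — usually stated for $\mathrm{Im}(z)>0$ — continues to hold on $(-\infty,0]$, which you do cover via the observation that the support $[\lambda_-,\lambda_+]$ is bounded away from the nonpositive axis when $0<\gamma<1$). Both are correct; yours is arguably cleaner for computing $m'(0)$, where the implicit formula avoids the second L'Hospital computation in the paper.
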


Lemma \ref{lemma:trace}, \ref{lemma:SLLN} and \ref{lem.MPLaw} are proved in Appendix \ref{subsec:lemmaproof2}.

\save{\subsection{Proof of Theorem \ref{theorem1}}}\label{appendix:pfthm1}

\begin{proof}[Proof of Theorem \ref{theorem1}]

To begin with, we recall the misclassification error of binary classification in \eqref{eq:miscls-err}. Substituting the Fisher linear discriminant classifier $f^{\hat{b}}_{\hat{\alpha}, \hat{\beta}}$ (with $\hat{\alpha}, \hat{\beta}, \hat{b}$ given in \eqref{eq:estimator}) and prior $\pi_0 = \pi_1 = 1/2$ into \eqref{eq:miscls-err}, we derive
\begin{align}
\cR(f^{\hat{b}}_{\hat{\alpha}, \hat{\beta}})& = \frac{1}{2}\PP \left(\hat \beta ^\top (x-\hat\alpha) \leq \ln\frac{n_1}{n_0} \given x\sim \cN(\mu_0,\Sigma)\right) \nonumber \\
& \quad + \frac{1}{2} \PP \left(\hat \beta ^\top (x - \hat \alpha) > \ln \frac{n_1}{n_0} \given x \sim \cN(\mu_1,\Sigma)\right)\nonumber\\
&= \frac{1}{2} \PP \left(\frac{\hat \beta ^ \top (x - \mu_0)}{\| \hat \beta \|_\Sigma} \leq  \frac{\hat \beta ^ \top (\hat \alpha - \mu _0) + \frac{n_1}{n_0}}{\|\hat \beta \|_\Sigma} \given x\sim \cN(\mu_0,\Sigma)\right) \nonumber \\
& \quad + \frac{1}{2} \PP \left(\frac{\hat \beta ^\top(x-\mu_1)}{\| \hat \beta \|_\Sigma} > \frac{\hat \beta ^\top(\hat \alpha - \mu_1)+\ln \frac{n_1}{n_0}}{\|\hat \beta\|_\Sigma}\given x \sim \cN(\mu_1,\Sigma)\right) \nonumber\\
&= \frac{1}{2} \Phi\left(\left[\betahat^\top  (\alphahat - \mu_0)+\ln{\frac{n_1}{n_0}}\right]/\| \betahat \|_\Sigma\right) + \frac{1}{2} \Phi\left(\left[\betahat^\top  (\mu_1 - \alphahat)+\ln{\frac{n_0}{n_1}}\right]/\|\betahat\|_\Sigma\right).\label{apeq:risk}
\end{align}
Therefore, it suffices to find the limits of 
\begin{align*}
q_0 = \left(\betahat^\top  (\alphahat - \mu_0)+\ln\frac{n_1}{n_0}\right)/\| \betahat \|_\Sigma \quad \textrm{and} \quad
q_1 = \left(\betahat^\top  (\mu_1 - \alphahat)+\ln\frac{n_0}{n_1}\right)/\|\betahat\|_\Sigma,
\end{align*}
since the Gaussian CDF $\Phi$ is continuous.

To further aid our analysis, we characterize the distributions of $\hat{\alpha}$, $\hat{\beta}$, and $\hat{\Sigma}$. Specifically, we make the following change of variables:
\begin{equation}\label{apeq:changeofvar}
\begin{split}
\hat\mu_0 &\eqd \frac{1}{\sqrt{n_0}}\Sigma^{\frac{1}{2}}z_0 + \mu_0,\\
\hat\mu_1 &\eqd\frac{1}{\sqrt{n_1}}\Sigma^{\frac{1}{2}}z_1 + \mu_1,\\
\hat{\Sigma}~ &\eqd \frac{1}{n-2}\Sigma^{\frac{1}{2}}Z^\top Z\Sigma^{\frac{1}{2}},
\end{split}
\end{equation}
where $z_0,z_1 \sim \cN(0,I_p)$ and $Z\in\RR^{(n-2)\times p}$ with each element $Z_{i,j}\sim \cN(0,1)$. Note that $z_0, z_1$ and $Z$ are independent with each other and $Z^\top Z$ is a Wishart matrix by Lemma \ref{lemma:independent}.

\noindent $\bullet$ {\bf Case 1. $0 < \gamma < 1$}. We present in detail how to characterize the asymptotic limit of $q_0$, and $q_1$ follows a similar argument. We tackle the numberator and denominator of $q_0$ separately. Using Lemma \ref{lemma:SLLN}, we check
\begin{align*}
\ln \frac{n_1}{n_0} \overset{\rm a.s.}{\longrightarrow} \ln \frac{\pi_1}{\pi_0} = \ln \frac{\gamma_0}{\gamma_1}.
\end{align*}
Therefore, we temporarily omit the threshold term $\ln (n_1/n_0)$ in both $q_0, q_1$ to ease the presentation. By the change of variables formula in \eqref{apeq:changeofvar} and some manipulation, we deduce
\begin{align}
\hat{\beta}^\top (\hat{\alpha} - \mu_0) & = - \frac{1}{2} (\mu_d + \frac{1}{\sqrt{n_0}}z_0 - \frac{1}{\sqrt{n_1}}z_1)^\top \left(\frac{Z^\top Z}{n-2}\right)^\dagger (\mu_d - \frac{1}{\sqrt{n_0}} z_0 - \frac{1}{\sqrt{n_1}} z_1)\nonumber\\
& =\frac{1}{2n_0} z_0^\top \left(\frac{Z^\top Z}{n-2}\right)^\dagger z_0  - \frac{1}{2}(\mu_d - z_1/\sqrt{n_1})^\top \left(\frac{Z^\top Z}{n-2}\right)^\dagger (\mu_d - z_1/\sqrt{n_1}) \label{apeq:q0numeratorstep1}\\
& \overset{d}{=} \underbrace{\left[ -\frac{1}{2}\|\mu_d -\tfrac{1}{\sqrt{n_1}}z_1\|_2^2 + \frac{1}{2n_0} \|z_0\|_2^2 \right]}_{A} \times \underbrace{\frac{1}{p} \tr\left(\frac{Z^\top Z}{n-2}\right)^{\dagger}}_{B}. \label{apeq:q0numeratorstep2}
\end{align}
where $\mu_d = \Sigma^{-1/2}(\mu_0 - \mu_1)$ and the last equality follows from the isotropicity of the Wishart matrix $Z^\top Z$ in lemma \ref{lemma3:isotropicity},
In the sequel, we establish the limits of terms $A$ and $B$, since they are independent.

\noindent {\bf Asymptotic convergence of $A$}. We show
\begin{align}\label{apeq:A}
A \overset{\rm a.s.}{\longrightarrow} -\frac{1}{2}(\Delta^2 + \gamma_1 - \gamma_0).
\end{align}
To see the result above, we expand term $A$ as
\begin{align*}
A=-\frac{1}{2}\norm{\mu_d}^2 + \frac{1}{\sqrt{n_1}}\mu_d^\top z_1 -\frac{1}{2n_1}\norm{z_1}^2 + \frac{1}{2n_0}\norm{z_0}^2.
\end{align*}
Recall that in Lemma \ref{lemma:SLLN}, we verify that $\frac{1}{n_1}\norm{z_1}_2^2 = \frac{p}{n_1} \frac{1}{p} \norm{z_1}_2^2 \overset{\rm a.s.}{\longrightarrow} \gamma_1$ and $\frac{1}{n_0}\norm{z_0}_2^2 \overset{\rm a.s.}{\longrightarrow} \gamma_0$ by the strong law of large numbers again. Using the Borel-Cantelli lemma, we claim $\frac{1}{\sqrt{n_1}} \mu_d^\top z_1 \overset{\rm a.s.}{\longrightarrow} 0$. Invoking the assertion in Theorem \ref{theorem1} that $\norm{\beta^*}_{\Sigma}^2 = \norm{\mu_d}_2^2 = \Delta^2$, we deduce the desired convergence of $A$.

\noindent {\bf Asymptotic convergence of $B$}. We next show
\begin{align}\label{apeq:B}
B \arrowas \frac{1}{1-\gamma}.
\end{align}
The convergence result above utilizes the Stieljes transformation of the Marchenko-Pastur law. Specifically, by the Bai-Yin theorem \citep{bai1993}, for $\gamma < 1$, 
\begin{equation*}
    \zeta_{\min}\left(\frac{Z^\top Z}{n-2}\right)\geq \frac{1}{2}(1-\sqrt{\gamma})^2,
\end{equation*}
which implies that $Z^\top Z$ is almost surely invertible.
Conditioned on $Z^\top Z$ being invertible, we rewrite $B$ as 
\begin{align*}
B = \frac{1}{p}\tr\left(\frac{Z^\top Z}{n-2}\right)^{-1} = \frac{1}{p} \sum_{i=1}^p \frac{1}{s_i} = \int \frac{1}{s} d F_{\frac{Z^\top Z}{n-2}} (s),
\end{align*}
where $s_i$'s denote the eigenvalues of $Z^\top Z/(n-2)$ and $F_{M}(a) = \frac{1}{p} \sum_{i=1}^p \mathbbm{1}\{\lambda_i(M) \leq a\}$ is the empirical measure of eigenvalues of $M$.

Now apply the Marchenko-Pastur theorem \cite{marchenko-pastur}, which says that $F_{Z^\top Z/(n-2)}$ converges weakly, almost surely, to the Marchenko-Pastur law $F_\gamma$ (depending only on $\gamma$). Invoking the Portmanteau theorem \citep{marchenko-pastur}, weak convergence is equivalent to the convergence in expectation of all bounded functions $h$, that are continuous except on a set of zero probability under the limiting measure. Defining $h(s)=1/s \cdot \mathds{1}\{s\geq a/2\}$, where we abbreviate $a=(1-\sqrt{\gamma})^2$, it follows that as $n,p \to \infty$, almost surely,
    \begin{equation*}\begin{split}
        \int_{a/2}^\infty \frac{1}{s} dF_{\frac{Z^\top Z}{n-2}}(s) \mathop{\to} & \int_{a/2}^\infty \frac{1}{s} dF_\gamma(s)
    \end{split}\label{apeq.dF}
    \end{equation*}
We can remove the lower limit of integration on both sides above; for the right-hand side, this follows since support of the Marchenko-Pastur law $F_\gamma$ is $[a,b]$, where $b = (1+\sqrt{\gamma})$; for the left-hand side, this follows again by the Bai-Yin theorem \cite{bai1993} (which as already stated, implies the smallest eigenvalues of $Z^\top Z/(n-2)$) is almost surely greater than $a/2$ for large enough n). Thus the last display implies that as $n,p\to \infty$, almost surely,
\begin{equation*}%\label{eq:LimTraceOfWinv}
B \to \int \frac{1}{s} F_\gamma(s)
\end{equation*}

Thanks to the Stieljes transformation of the Marchenko-Pastur law, we can explicitly compute $\int \frac{1}{s} d F_\gamma(s)$. In particular, the Stieljes transformation of $F_\gamma$ is defined as
\begin{align*} 
m(\zeta) = \int \frac{1}{s - \zeta} d F_\gamma(s) \quad \textrm{with} \quad \zeta \in \CC.
\end{align*}
Taking $\zeta \rightarrow 0$, we obtain $m(0) = \lim_{\zeta \rightarrow 0} m(\zeta) = \frac{1}{1-\gamma}$ by Lemma \ref{lem.MPLaw}. Consequently, we establish $B \arrowas \frac{1}{1-\gamma}$.

Substituting \eqref{apeq:A} and \eqref{apeq:B} into \eqref{apeq:q0numeratorstep2} yields
\begin{align}\label{apeq:numerator}
\hat{\beta}^\top (\hat{\alpha} - \mu_0) \arrowas -\frac{1}{2}(\Delta^2 + \gamma_1 - \gamma_0) \frac{1}{1-\gamma}.
\end{align}

Next we consider the denominator $\|\hat{\beta}\|_{\Sigma}$ in $q_0$. Applying the change of variables in \eqref{apeq:changeofvar} and using Lemma \ref{lemma3:isotropicity}, analogous to \eqref{apeq:q0numeratorstep2}, we derive
\begin{align}
\|\widehat\beta \|_\Sigma^2
= & (\widehat{\mu}_0-\widehat{\mu}_1)^\top \widehat\Sigma^\dagger \Sigma \widehat\Sigma^\dagger(\widehat{\mu}_0-\widehat{\mu}_1)\nonumber  \\
= & ({\mu}_d + \frac{1}{\sqrt{n_0}}{z}_0 - \frac{1}{\sqrt{n_1}}{z}_1)^\top  \left(\left(\frac{Z^\top Z}{n-2}\right)^{\dagger}\right)^2 ({\mu}_d + \frac{1}{\sqrt{n_0}}{z}_0 - \frac{1}{\sqrt{n_1}}{z}_1)\nonumber \\
\overset{d}{=}& \underbrace{\norm{\mu_d + \frac{1}{\sqrt{n_0}}z_0 - \frac{1}{\sqrt{n_1}}z_1}_2^2}_{A'} \times \underbrace{\frac{1}{p} \tr\left(\left(\frac{Z^\top Z}{n-2}\right)^\dagger\right)^2}_{B'}. \label{apeq:q0denominator}
\end{align}
The convergence of term $A'$ follows the same argument of term $A$ in the numerator, and we have
\begin{align}\label{apeq:A'}
A' \arrowas \Delta^2 + \gamma_1 + \gamma_0.
\end{align}
Conditioned on $Z^\top Z$ being invertible, term $B'$ can be written as
\begin{align*}
B' = \int \frac{1}{s^2} d F_{\frac{Z^\top Z}{n-2}}(s) \longrightarrow \int \frac{1}{s^2} dF_\gamma (s).
\end{align*}
To compute the limiting integral above, we differentiate the Stieljes transformation $m(\zeta)$. By sending $\zeta \rightarrow 0$ again, we can derive
\begin{align}\label{apeq:B'}
B' \arrowas \lim_{\zeta \rightarrow 0} \frac{d}{d\zeta} m(\zeta) = \frac{1}{(1-\zeta)^3}.
\end{align}
Substituting \eqref{apeq:A'} and \eqref{apeq:B'} into \eqref{apeq:q0denominator} yields
\begin{align}\label{apeq:divider}
\|\hat{\beta}\|_{\Sigma} \arrowas \sqrt{\Delta^2 + \gamma_1 + \gamma_0} \frac{1}{(1-\gamma)^{3/2}}.
\end{align} 

Combining \eqref{apeq:numerator} and \eqref{apeq:divider}, as well as putting the threshold term $\ln (n_1/n_0)$ back, we obtain
\begin{align}\label{apeq:q0}
q_0 \arrowas \frac{-\frac{\Delta^2 + \gamma_1 - \gamma_0}{2(1-\gamma)}+\ln \frac{\gamma_0}{\gamma_1}}{\sqrt{\Delta^2 + \gamma_1 + \gamma_0} \frac{1}{(1-\gamma)^{3/2}}}.
\end{align}
The same argument of analyzing $q_0$ applies to $q_1$, and therefore, we have
\begin{align}\label{apeq:q1}
q_1 \arrowas \frac{-\frac{\Delta^2 + \gamma_0 - \gamma_1}{2(1-\gamma)}+\ln \frac{\gamma_1}{\gamma_0}}{\sqrt{\Delta^2 + \gamma_1 + \gamma_0} \frac{1}{(1-\gamma)^{3/2}}}.
\end{align}
To complete the proof in the case of $0 < \gamma < 1$, we plugging \eqref{apeq:q0} and \eqref{apeq:q1} into \eqref{apeq:risk}.

\noindent $\bullet$ {\bf Case 2. $\gamma > 1$}. The goal is still to find the limits of $q_0$ and $q_1$. Consider $q_0$ first. We observe that both \eqref{apeq:q0numeratorstep2} and \eqref{apeq:q0denominator} are valid for $\gamma > 1$. However, a key difference is that $Z^\top Z$ is rank deficient in the limit considering $\gamma > 1$. To resolve this issue, we observe that $Z^\top Z$ and $ZZ^\top$ share all nonzero eigenvalues. Therefore, we can replace $Z^\top Z$ by $ZZ^\top$ in \eqref{apeq:q0numeratorstep2} and \eqref{apeq:q0denominator} without changing their values. Such a reasoning is justified by Lemma \ref{lemma:trace}.
We can now rewrite \eqref{apeq:q0numeratorstep2} as
\begin{align}
\hat{\beta}^\top (\hat{\alpha} - \mu_0) &\overset{d}{=} \left[ -\frac{1}{2}\|\mu_d -\tfrac{1}{\sqrt{n_1}}z_1\|_2^2 + \frac{1}{2n_0} \|z_0\|_2^2 \right] \times \frac{(n-2)^2}{p^2}\frac{1}{n-2}\tr\left(\frac{Z Z^\top}{p}\right)^{\dagger}. \nonumber\\
&\arrowas -\frac{1}{2}\left[\Delta^2 + \gamma_1 - \gamma_0 \right] \times \frac{1}{\gamma(\gamma - 1)}.\label{apeq:numeratorresult}
\end{align}
Similarly we can rewrite \eqref{apeq:q0denominator} as 
\begin{align}
    \| \hat \beta \|_\Sigma^2 & \eqd \norm{ \mu_d + \frac{1}{\sqrt{n_0}} z_0 - \frac{1}{\sqrt{n_1}} z_1 }_2^2 \times \frac{(n-2)^3}{p^3} \frac{1}{n-2} \tr \left( \left(\frac{Z^\top Z}{n-2}\right)^\dagger \right)^2 \nonumber \\
    \label{apeq:q0denominatorresult}&\arrowas (\Delta^2 + \gamma_0 + \gamma_1) \times \frac{1}{(\gamma-1)^3}. \\
\end{align}
Combining \eqref{apeq:numeratorresult} and \eqref{apeq:q0denominatorresult}, as well as putting the threshold term $\ln(n_1/n_0)$ back, we obtain
\begin{align}
q_0 & \arrowas \frac{-\frac{\Delta^2 + \gamma_1 - \gamma_0}{2\gamma(\gamma-1)} + \ln \frac{\gamma_0}{\gamma_1}}{\sqrt{\Delta^2 + \gamma_1 + \gamma_0} \frac{1}{(\gamma-1)^{3/2}}}, \nonumber
\end{align}
The same argument of analyzing $q_0$ applies to $q_1$, and therefore, we have
\begin{align}
q_1 & \arrowas \frac{-\frac{\Delta^2 + \gamma_0 - \gamma_1}{2\gamma(\gamma-1)} + \ln \frac{\gamma_1}{\gamma_0}}{\sqrt{\Delta^2 + \gamma_1 + \gamma_0} \frac{1}{(\gamma-1)^{3/2}}}. \nonumber
\end{align}
The misclassification error in the case of $\gamma > 1$ follows by substituting $q_0, q_1$ above into \eqref{apeq:risk}. The proof is complete.
\end{proof}

\begin{figure}[!tp]
    \begin{center}
        \includegraphics[width=0.5\textwidth]{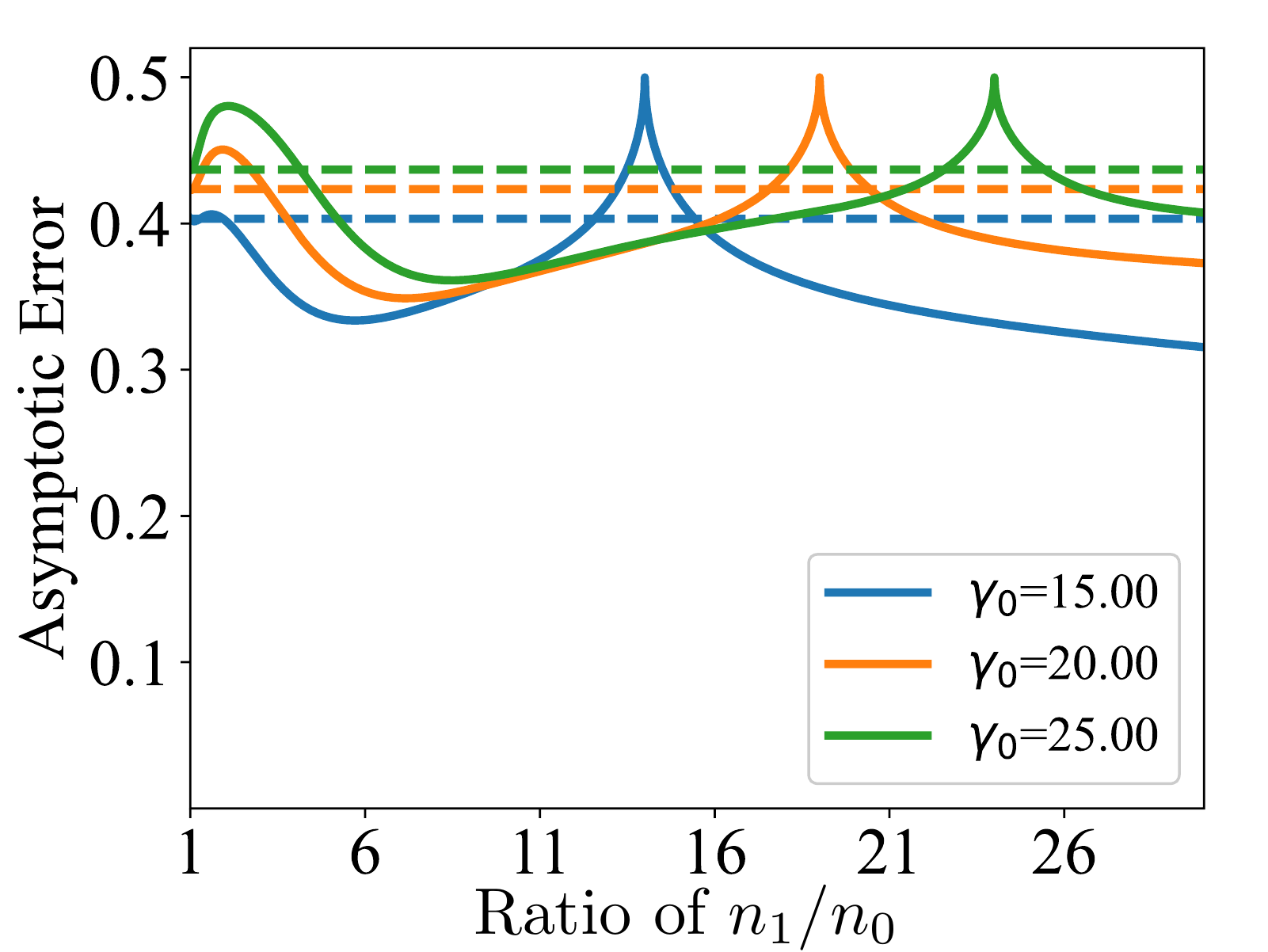}
    \end{center}
    \vspace{-0.2cm}
    \caption{Misclassification error in the highly overparametrized regime as a function of $n_1/n_0$, with $\Delta^2 = 9$.}
    \label{fig:highlyoverparametried}
    \vspace{-0.5cm}
\end{figure}

\save{\section{Conclusion and Discussion}
\label{sec:ConclusionAndDiscussion}}

This paper provides a theoretical analysis on the performance of LDA under label shift, in both the under- and over-parametrized regime. We explicitly quantify the misclassification error in the proportional limit of $n \rightarrow \infty$ and $p/n_{\ell}  \rightarrow \gamma_{\ell}$ for $\ell=0,1$, where $\gamma_{\ell} > 0$ is a constant.
Our theory shows a peaking phenomenon when the sample size is close to the data dimension.
We demonstrate a {\it phase transition} phenomenon about data imbalance: The misclassification error exhibits different behaviors as the two-class ratio $n_1/n_0$ varies, depending on the value of $\gamma_0$. We clearly characterize the three behaviors of the misclassification error in the underparametrized, lightly overparametrized, and overparametrized regions depending on $\gamma_0$. We also investigate the regularized LDA, and show that the peaking and phase transition phenomenons disappear when the regularization becomes strong.

Additionally, in the highly overparametrized regime, the misclassification error under label shift has multiple maxima, as shown in Figure \ref{fig:highlyoverparametried}. We believe that the multi-peaking behavior of the misclassification error reflects the intertwined interaction between the lack of information in data and the model mismatch under label shift.

\appendix

\section{Proof of Phase Transition in Section \ref{sec:phaselda}}\label{appendix:pfphasetransition}

\paragraph{Misclassification error as $n_1/ n_0 \rightarrow \infty$}

In this section, we will give a proof to show the misclassificaiton error tends to 0.5 when $n_1/n_0\to\infty$. When the training data set is extremely imbalanced with $n_1/n_0\rightarrow \infty$, i.e., $\gamma_0/\gamma_1\to\infty$, the Bayes classifier tends to classify all data points to class 1. This leads to the following limits,
\begin{equation*}
    \frac{\Delta^2+\gamma_0-\gamma_1}{\sqrt{\Delta^2+\gamma_0+\gamma_1}}\to +\infty,\qquad \frac{\Delta^2+\gamma_1-\gamma_0}{\sqrt{\Delta^2+\gamma_0+\gamma_1}}\to -\infty.
\end{equation*}
Then the limit of misclassification error is given by
\begin{equation*}
    \cR(f_{\widehat \alpha, \widehat \beta}^{\widehat b})\overset{\rm a.s.}\longrightarrow\frac{1}{2}\Phi(-\infty)+\frac{1}{2}\Phi(+\infty)=0.5.
\end{equation*}
The proof is complete. 
\paragraph{Phase transition knots}

In this section, we provide theoretical justifications of the phase transition knots given in section \ref{sec:phaselda}.
Denote the asymptotic misclassification error in \eqref{eq:theorem-underparametrized} and \eqref{eq:theorem-overparametrized} as $\cR(\gamma_0,\gamma_1)$. 
We fix $\gamma_0$ and let $\gamma_1$ vary starting from the balanced case with $\gamma_1=\gamma_0$. 

The transition knots are obtained by a local analysis about the instantaneous change of misclassification error as $n_1/n_0$ slightly increases from $1$. We observe that, as $n_1/n_0$ slightly increases from $1$,  the misclassification error decreases in Phase I and III, and increases in Phase II. Notice that $\gamma_1$ slightly decreases from $\gamma_0$ as $n_1/n_0$ slightly increases from $1$. 

The instantaneous change of $\cR(\gamma_0,\gamma_1)$ with respect to $\gamma_1$ can be characterized by 
the following partial derivative,  $\frac{\partial}{\partial\gamma_1}\cR(\gamma_0,\gamma_1)\given_{\gamma_1=\gamma_0}$:

\begin{align}
    \hspace{-3cm}\frac{\partial}{\partial\gamma_1}\cR\given_{\gamma_1=\gamma_0} = \frac{\Delta^2 \phi\left(\frac{-\Delta^2(1-\frac{1}{2}\gamma_0)^{\frac{1}{2}}}{2\sqrt{\Delta^2+2\gamma_0}}\right)}{16(\Delta^2+2\gamma_0)^{\frac{3}{2}}(1-\frac{1}{2}\gamma_0)^{\frac{1}{2}}}
    \times
    [4+\Delta^2], \nonumber
\end{align}
for $\gamma_0<2$, and
\begin{align}
    \frac{\partial}{\partial\gamma_1}\cR\given_{\gamma_1=\gamma_0} = \frac{\Delta^2\phi\left(\frac{-\Delta^2(\frac{1}{2}\gamma_0-1)^{\frac{1}{2}}}{\gamma_0\sqrt{\Delta^2+2\gamma_0}}\right)}{4\gamma_0^2(\Delta^2+2\gamma_0)^{\frac{3}{2}}(\frac{1}{2}\gamma_0-1)^{\frac{1}{2}}}
    \times
    \underbrace{[4\gamma_0^2-(12-\Delta^2)\gamma_0-4\Delta^2]}_{Q(\gamma_0,\Delta)}, \nonumber
\end{align}
for $\gamma_0>2$, where $\phi$ is the probability density function of the standard normal distribution. The  $Q(\gamma_0,\Delta)$ term is a quadratic function with two roots of opposite signs. The positive root is $\gamma_b=\frac{1}{8}(12-\Delta^2+\sqrt{\Delta^4+40\Delta^2+144})$. The sign of the above partial derivative has the following cases:

\begin{itemize}

\item When $\gamma_0\in (0,2)$, $\frac{\partial}{\partial\gamma_1}\cR(\gamma_0,\gamma_1)\given_{\gamma_1=\gamma_0}$ is always positive. As a result, $\cR(\gamma_0,\gamma_1)$ decreases as $\gamma_1$ decreases from $\gamma_0$, which corresponds to Phase I.

\item When $\gamma_0 \in (2,\gamma_b)$, $\frac{\partial}{\partial\gamma_1}\cR(\gamma_0,\gamma_1)\given_{\gamma_1=\gamma_0}$ is negative. In this case $\cR(\gamma_0,\gamma_1)$ increases as $\gamma_1$ decreases from $\gamma_0$, which corresponds to Phase II.

\item When $\gamma_0 \in (\gamma_b,+\infty)$, $\frac{\partial}{\partial\gamma_1}\cR(\gamma_0,\gamma_1)\given_{\gamma_1=\gamma_0}$ is positive. In this case $\cR(\gamma_0,\gamma_1)$ decreases as $\gamma_1$ decreases from $\gamma_0$, which includes Phase III.

\end{itemize}

\section{Proofs of Lemmas in Section \ref{subsec:lemma}}
\label{subsec:lemmaproof}

\subsection{Proofs of Lemma \ref{lemma:independent} and Lemma \ref{lemma3:isotropicity} }\label{appendix:pflma}

\begin{proof}[Proof of lemma \ref{lemma:independent}]
    We will prove this result by Basu theorem, i.e., we will show that ($\widehat\mu-\mu$) is a complete and sufficient statistic and $\widehat \Sigma$ is an auxiliary statistic w.r.t. $\mu$. %Then their independence follows from directly. 

    First, we show $\widehat\mu$ is a complete statistic. We need to check that for any $\mu$ and measurable function $g$, $\EE [g(\hat{\mu})] = 0$ for any $\mu$ implies $\PP (g(\hat{\mu}) = 0) = 1$ for any $\mu$. Indeed, for any measurable function $g$ such that the expectation of $g(\widehat {\mu})$ over sample space $(x_1,x_2,\dots,x_n)$ is zero, i.e.,
    \begin{equation}\label{apeq:independenceasm}
        \EE [g(\widehat {\mu})] = 0\quad \textrm{for any}~\mu,
    \end{equation}
    we can derive $\PP (g(\hat{\mu} - \mu) = 0) = 1$ by taking derivatives of Equation \eqref{apeq:independenceasm} w.r.t. $\mu$ recursively,
    \begin{equation*}
        \EE \bigg[ h(\widehat {\mu}) g(\widehat {\mu})\bigg] = 0,  \text{for any polynomial $h$},
    \end{equation*}
    and therefore $\widehat {\mu}$ is a complete statistic w.r.t. parameter ${\mu}$.

    To prove $\widehat\mu$ is also a sufficient statistic for ${\mu}$, we need to show that given the statistic $\widehat \mu$ the conditional distribution of $x_1,...,x_n$ does not depend on $\mu$. Note that $\widehat \mu$ has a multivariate normal distribution, i.e., $\widehat {\mu} \sim \cN({\mu}, \frac{1}{n}\Sigma)$, since $\widehat {\mu} = \frac{1}{n} \sum_{i=1}^n x_i$ is a linear combination of i.i.d. multivariate normal vectors $x_1,x_2, \dots, x_n$. The pdf of $\hat\mu$ and the joint distribution of $x_1, x_2, \dots, x_n$ are given by
    \begin{align}
        \label{apeq:muhat}f(\widehat {\mu} ) &= \frac{1}{(2\pi)^{\frac{p}{2}}\vert\frac{1}{n}\Sigma\vert^{\frac{1}{2}}} \exp \left(-\frac{n}{2}(\widehat {\mu} - {\mu})^\top  \Sigma^{-1} (\widehat {\mu} - {\mu})\right),\\
        f(x_1,\dots,x_n) &= \frac{1}{(2\pi)^\frac{np}{2} \vert\Sigma\vert^\frac{n}{2}} \exp\left(-\sum_{i=1}^n \frac{1}{2}(x_i-{\mu})^\top  \Sigma^{-1} (x_i-{\mu})\right).\nonumber
    \end{align}
    
    \par The joint density function of $x_1, \dots, x_n$ and $\hat \mu$ is given by
    \begin{equation}\label{apeq:xi-muhat}
        f(x_1,\dots,x_n,\muhat) =f(x_1,\dots,x_n) \mathds{1}\left(\muhat=\frac{1}{n}(x_1+x_2+\dots+x_n)\right).
    \end{equation}
    
    \par By taking the fraction of \eqref{apeq:muhat} and \eqref{apeq:xi-muhat}, the conditional density of $x_1,\dots, x_n$ given $\hat\mu$ is
    \begin{equation}
        f\left(x_1,\dots,x_n\given\widehat {\mu}\right)= C  \exp\left(-\frac{1}{2}(x-\widehat {\mu})^\top  \Sigma^{-1} (x-\widehat {\mu})\right),
    \end{equation}
    where $C$ is a constant. By Fisher-Neyman factorization theorem \citep{lehmann2011fisher},  given the statistic $\widehat \mu$ the conditional distribution of $x_1,...,x_n$ does not depend on $\mu$ and therefore $\hat\mu$ is a sufficient statistic for ${\mu}$.
    
    Sample covariance has a distribution which doesn't depend on the parameter ${\mu}$.
    \begin{equation}
        \widehat \Sigma = \sum_{i=1}^n (x_i - \widehat {\mu})^\top  (x_i - \widehat {\mu}) = \sum_{i=1}^n z_i^\top z_i,
    \end{equation}
    and therefore it is a auxiliary statistic.

    Combining $\widehat \mu$ being a complete and sufficient statistic and $\widehat \Sigma$ being an auxiliary statistic, we obtain that $\widehat {\mu}$ and  $\widehat \Sigma$ are independent, by Basu Theorem.
\end{proof}

\begin{proof}[Proof of lemma \ref{lemma3:isotropicity}]
    The following isotropic property of Wishart distribution has been given by \citet{wang2018}. For any orthogonal matrix $U \in \RR^{p \times p}$, we have
    \begin{equation}
        U^\top \left(\frac{Z^\top Z}{n-2}\right) U \sim \mathcal{W}(I_p,n-2).
    \end{equation}
    We next apply this property to the left-hand side of equation \eqref{apeq:wishart},
    \begin{equation}\begin{split}
        z^\top \left(\frac{Z^\top Z}{n-2}\right)^{\dagger}z &=z^\top U_i U_i^\top \left(\frac{Z^\top Z}{n-2}\right)^\dagger  U_i U_i^\top z\\
                 &=\norm z^2 e_i^\top \left( U_i^\top \left(\frac{Z^\top Z}{n-2}\right)U_i\right)^{\dagger}  e_i\\
                 &\eqd \norm z^2 e_i^\top \left(\frac{Z^\top Z}{n-2}\right)^{\dagger} e_i,
    \end{split}\end{equation}
    where $U_i$ is a orthogonal matrix that transforms the vector $z$ to canonical basis vector $e_i$, i.e., 
    \begin{equation}
        U_i^\top z = \norm z_2  e_i, \quad i=1,2,\dots,p.
    \end{equation}
    We can further simplify the product $ z ^\top (\frac{Z^\top Z}{n-2})^{\dagger}  z$ by taking an average over the index $i$,
    \begin{equation}\begin{split}
        z^\top \left(\frac{Z^\top Z}{n-2}\right)^\dagger z &\mathop{=}\limits^d \frac{1}{p} \| z \|_2^2 \sum_{i=1,...,p} e_i^\top \left(\frac{Z^\top Z}{n-2}\right)^{\dagger} e_i\\
        & = \frac{1}{p}\|{z}\|^2_2 \tr\left(\left(\frac{Z^\top Z}{n-2}\right)^\dagger\right),
    \end{split}\end{equation}
    where we get the isotropicity of Wishart distribution in Equation \eqref{apeq:wishart}.
\end{proof}

\subsection{Proofs of Lemma \ref{lemma:trace}, Lemma \ref{lemma:SLLN} and Lemma \ref{lem.MPLaw}}\label{subsec:lemmaproof2}

\begin{proof}[Proof of Lemma \ref{lemma:trace}]
    We use the eigenvalue decomposition of $Z^\top Z=U^\top D U$ to simplify the left-hand side of Equation \eqref{apeq:trace},
    \begin{equation*}\begin{split}
        \mathrm{tr}[(Z^\top Z)^\dagger] &=\tr\left(U D^\dagger U^\top\right) \\
        &=\tr\left(D^\dagger\right)\\
        &=\sum\limits_{s \in \lambda(Z^\top Z),s\neq 0} \frac{1}{s}.
    \end{split}\end{equation*}
    \par The result above implies that the trace of the pseudo-inverse of $Z^\top Z$ is equal to the sum of the reciprocal of its eigenvalues. By the same arguments on $ZZ^\top$, we can show that 
    \begin{equation*}
         \mathrm{tr}[(ZZ^\top )^\dagger] =\sum\limits_{s \in \lambda(ZZ^\top),s\neq 0} \frac{1}{s}.
    \end{equation*}
    Then we deduce the desired result by the fact that the set of non-zero eigenvalues of $ZZ^\top $ matches that of $Z^\top Z$.
\end{proof}

\begin{proof}[Proof of Lemma \ref{lemma:SLLN}]
    \par We first compute the limit of $\frac{1}{\sqrt{p}}\mu_d^\top z$. The linear combination of multivariate normal random vector $z\sim \mathcal{N}(0,I_p)$ is a normal random variable, namely, $\frac{1}{\sqrt{p}}\mu_d^\top z \sim \mathcal{N}(0, \frac{1}{p}{\mu}_d^\top \mu_d)$. From the concentration inequality of the normal random variable \cite{IntroProbTheoryApplication}, we have
    \begin{equation}\label{apeq:concentration}
        \mathbb{P}\left(\left\vert\frac{1}{\sqrt{p}}\mu_d^\top z\right\vert\geq \frac{\epsilon}{\sqrt{p}}\|\mu_d\|_2\right)\leq 2 \frac{1}{\sqrt{2\pi}\epsilon} e^{-\epsilon^2/2} \quad \text{for all }x\geq 0.
    \end{equation}
    Combining Equation \eqref{apeq:concentration}, $e^{-x}<\frac{1}{x}$ for $x>0$ and $\|\mu_d\|_2 < 2\Delta$ for a sufficiently large $p$, the sum of the probabilities of $\frac{1}{\sqrt{p}}\vert\mu_d^\top {z}\vert > \epsilon $ is finite, for any positive $\epsilon > 0$, i.e.,
    \begin{equation}\begin{split}
    \sum_{p=1}^{\infty}\mathbb{P}\left(\frac{1}{\sqrt{p}}\vert\mu_d^\top {z}\vert > \epsilon\right) &\leq \sum_{p=1}^{\infty} \frac{\|\mu_d\|_2}{\sqrt{2\pi p}\epsilon} e^{-(\epsilon^2p)/(2\|\mu_d\|_2^2)}\\
    &<\sum_{p=1}^{\infty}\frac{2\|\mu_d\|_2^3}{\sqrt{2\pi}\epsilon^3}~\frac{1}{p^{3/2}}\\
    &<\infty. \nonumber
    \end{split}\end{equation}
    
    By the Borel-Cantelli lemma, we have 
    $$\frac{1}{\sqrt{p}}\mu_d^\top {z} \overset{\rm a.s.} \longrightarrow 0.$$
    
    \par We next consider the limit of $\frac{1}{p}z_\ell^\top z_\ell$. Since $z_\ell$ satisfies the chi-squared distribution independently with expectation $\mathbb{E}[z_{\ell,i}^2]=1$ and finite variance $\mathbb{V}ar (z_{\ell,i}^2)=2$, we know the average of the squared elements in $z_\ell$ converges to the expectation almost surely by the strong law of the large numbers, namely, 
    $$\frac{1}{p}z_\ell^\top z_\ell \arrowas 1. $$ 
    \par By the same arguments given above, $n_\ell$ satisfies the binomial distribution $B(n_9+n_1,\pi_\ell)$ which is composed of $n_0+n_1$ independent Bernoulli distribution with expectation $\pi_0$ and Variance $\pi_0\pi_1$. From the strong law of large numbers, we have 
    $$\frac{n_\ell}{n_0+n_1}\arrowas \pi_\ell.$$
\end{proof}

\begin{proof}[Proof of Lemma \ref{lem.MPLaw}]
We first derive the expression of $m(\zeta)$. The Marchenko-Pastur law is supported on a compact subset of $\cR^+$, i.e., $\mathrm{supp}(F_{\gamma})\subset [a,b]$ where
\begin{equation}
    a=(1-\sqrt{\gamma})^2, \quad \text{and} \quad b=(1+\sqrt{\gamma})^2. \nonumber
\end{equation}

     Let $\{z_k\}$ be a sequence of complex numbers such that $\mathrm{Im}(z_k)>0, \mathrm{Re}(z_k)=\zeta$ for any $k$ and $\lim_{k\rightarrow \infty}z_k=\zeta$.
    Consider the sequence of integral
    $$
    \int_{a}^b \frac{1}{s-z_k} dF_\gamma(s).
    $$
    
For any $k$, $0<\gamma<1$ and $s>a$, we have $$\left\vert\frac{1}{s-z_k}\right\vert\leq\left\vert\frac{1}{s}\right\vert<\frac{1}{(1-\sqrt{\gamma})^2}<\infty.$$
    By the dominated convergence theorem, we have
    \begin{align}
        \int \frac{1}{s-\zeta} dF_\gamma(s)&=\int_{a}^b \lim_{k\rightarrow\infty}\frac{1}{s-z_k} dF_\gamma(s)=\lim_{k\rightarrow\infty} \int_{a}^b \frac{1}{s-z_k} dF_\gamma(s)\nonumber\\
        &=\lim_{k\rightarrow\infty} \int \frac{1}{s-z_k} dF_\gamma(s). \quad 
        \label{eq.intlim}
    \end{align}
    %where the second equality holds since $F_{\gamma}(s)$ is supported on $[a,b]$.
    To compute $\int \frac{1}{s-z_k} dF_\gamma(s)$,    \citet[Lemma 3.11]{Bai-and-Silverstein2010} gives
    \begin{align}
        \int \frac{1}{s-z_k} dF_\gamma(s)= \frac{1-\gamma-z_k+\sqrt{(z_k-\gamma-1)^2-4\gamma}}{2\gamma z_k}.
        \label{eq.Bai}
    \end{align}
    
    According to the definition of the square root of complex numbers in \citet[Equation (2.3.2)]{Bai-and-Silverstein2010}, the real part of $\sqrt{(z_k-\gamma-1)^2-4\gamma}$ has the same sign as that of $z_k-\gamma-1$. Since $\mathrm{Re}(z_k)=\zeta\leq0, \gamma>0$, the real part of $\sqrt{(z_k-\gamma-1)^2-4\gamma}$ is negative and gives
    \begin{align}
        \lim_{k\rightarrow \infty}  \sqrt{(z_k-\gamma-1)^2-4\gamma}= -\sqrt{(\zeta-\gamma-1)^2-4\gamma}.
        \label{eq.squarelimit}
    \end{align}
    Substituting (\ref{eq.squarelimit}) and (\ref{eq.Bai}) into (\ref{eq.intlim}) gives rise to (\ref{eq.MP.m}).
    
    We then compute $m(0)$. When substituting $\zeta=0$ into (\ref{eq.MP.m}), both the numerator and the denominator are 0. Here we apply L'Hospital's rule:
    \begin{align*}
        m(0)
        &=\lim_{\zeta\rightarrow 0}\frac{1-\gamma-\zeta-\sqrt{(\zeta-\gamma-1)^2-4\gamma}}{2\gamma \zeta} \\
        &=\lim_{\zeta\rightarrow 0} \frac{1}{2\gamma} \left(-1-\frac{\zeta-\gamma-1}{\sqrt{(\zeta-\gamma-1)^2-4\gamma}}\right)\\
        &=\frac{1}{2\gamma} \left(-1-\frac{-\gamma-1}{\sqrt{(-\gamma-1)^2-4\gamma}}\right) \nonumber\\
        &=\frac{1}{2\gamma} \left(-1+\frac{1+\gamma}{1-\gamma}\right) \nonumber\\
        &=\frac{1}{1-\gamma}.
        \label{eq.limint}
    \end{align*}

We next derive the expression of $\frac{d}{d\zeta}m(\zeta)$. To derive the expression, we first show that 
    \begin{align}
        \int \frac{1}{(s-\zeta)^2} dF_\gamma(s)= \lim_{z\rightarrow \zeta}\frac{d }{d z} \int \frac{1}{s-z} dF_\gamma(s) \quad \textrm{  for $z\in \CC$ with $\mathrm{Re}(z)=\zeta, \mathrm{Im}(z)>0$.}
    \end{align}
    Let $\{h_k\}$ be a set of complex numbers such that $\vert\mathrm{Re}(h_k)\vert\leq \vert\zeta\vert/2$ for any $k$ and $\lim_{k\rightarrow\infty}h_k=0$. For any $k$ and $s\geq a$, we have
    \begin{align*}
        \left\vert\frac{1}{(s-z-h_k)(s-z)}\right\vert\leq \frac{1}{(1-\sqrt{\gamma})^4}<\infty.
    \end{align*}
    By the dominated convergence theorem, we have
    \begin{align}
    \frac{\partial }{\partial z} \int \frac{1}{s-z} dF_\gamma(s)=
        &\lim_{k\rightarrow\infty}\frac{1}{h_k}\left[\int \frac{1}{s-(z+h_k)} dF_\gamma(s)-\int \frac{1}{s-z} dF_\gamma(s)\right] \nonumber\\
        =&\lim_{k\rightarrow\infty}\frac{1}{h_k}\int_{a}^{b} \frac{1}{s-z-h_k}-\frac{1}{s-z} dF_\gamma(s) \nonumber\\
        =&\lim_{k\rightarrow\infty}\int_{a}^{b} \frac{1}{(s-z-h_k)(s-z)} dF_\gamma(s)\nonumber \\
        =&\int_{a}^{b}\frac{1}{(s-z)^2} dF_\gamma(s),\nonumber
    \end{align}
    where the second equality holds since $F_{\gamma}(s)$ is supported on $[a,b]$.
    
    Since  
    \begin{align*}
        \left\vert\frac{1}{(s-z)^2}\right\vert\leq \frac{1}{(1-\sqrt{\gamma})^4}<\infty,
    \end{align*}
    for any $s\geq a$, we have
    \begin{align}
        \int \frac{1}{(s-\zeta)^2}dF_{\gamma}(s)&= \int_{a}^{b} \lim_{z\rightarrow \zeta}\frac{1}{(s-z)^2}dF_{\gamma}(s) =\lim_{z\rightarrow\zeta} \int_{a}^{b} \frac{1}{(s-z)^2}dF_{\gamma}(s)\nonumber\\
        &= \lim_{z\rightarrow\lambda}\frac{\partial }{\partial z} \int \frac{1}{s-z} dF_\gamma(s). \nonumber
    \end{align}
    Using (\ref{eq.Bai}), we have
    \begin{align}
        &\frac{\partial }{\partial z} \int \frac{1}{s-z} dF_\gamma(s) \nonumber\\&= \frac{(2\gamma z)\left[-1+\frac{z-\gamma-1}{\sqrt{(z-\gamma-1)^2-4\gamma}}\right]-(2\gamma)\left(1-\gamma-z+\sqrt{(z-\gamma-1)^2-4\gamma}\right)}{4\gamma^2z^2} \nonumber\\
        &=\frac{(2\gamma)(\gamma-1)+\frac{(2\gamma z)(z-\gamma-1)}{\sqrt{(z-\gamma-1)^2-4\gamma}}-2\gamma \sqrt{(z-\gamma-1)^2-4\gamma}}{4\gamma^2z^2}.
        \label{eq.partialBai}
    \end{align}
    Letting $z\rightarrow\zeta$ in (\ref{eq.partialBai}) and recall that the real part of $\sqrt{(z-\gamma-1)^2-4\gamma}$ is negative, one gets (\ref{eq.MP.dm}).
    
    To compute $\frac{d}{d\zeta}m(0)$, by L'Hopital's rule, we deduce
    \begin{align}
       \frac{d}{d\zeta}m(0)= \lim_{\zeta\rightarrow0} \frac{(2\gamma)(\gamma-1)-\frac{(2\gamma z)(\zeta-\gamma-1)}{\sqrt{(\zeta-\gamma-1)^2-4\gamma}}+2\gamma \sqrt{(\zeta-\gamma-1)^2-4\gamma}}{4\gamma^2\zeta^2}=\frac{1}{(1-\gamma)^3} .
        \label{eq.lhopital2}
    \end{align}
\end{proof}

\section{Proofs in Section \ref{sec:RegularizationImpactOnMisclassification}}
\subsection{Proof of Theorem \ref{theorem2regularized}}
\label{appendixtheorem2regularized}

\begin{proof}[Proof of Theorem \ref{theorem2regularized}]
The proof uses the same technique as in the Theorem \ref{theorem1}, the misclassification error is the same as \eqref{apeq:risk}, and we only need to show the limits of $q_0$ and $q_1$. By the change of variables formula in \eqref{apeq:changeofvar} and  Lemma \ref{lem.MPLaw}, we deduce 
\begin{align}
&\hat{\beta}^\top (\hat{\alpha} - \mu_0) \nonumber\\& = - \frac{1}{2} (\mu_d + \frac{1}{\sqrt{n_0}}z_0 - \frac{1}{\sqrt{n_1}}z_1)^\top \left(\frac{Z^\top Z}{n-2} + \lambda I_p\right)^\dagger (\mu_d - \frac{1}{\sqrt{n_0}} z_0 - \frac{1}{\sqrt{n_1}} z_1)\nonumber \\
& =\frac{1}{2n_0} z_0^\top \left(\frac{Z^\top Z}{n-2}+ \lambda I_p\right)^\dagger z_0  - \frac{1}{2}(\mu_d - z_1/\sqrt{n_1})^\top \left(\frac{Z^\top Z }{n-2}+ \lambda I_p\right)^\dagger (\mu_d - z_1/\sqrt{n_1})  \nonumber\\
& \overset{d}{=} \left[ -\frac{1}{2}\|\mu_d -\tfrac{1}{\sqrt{n_1}}z_1\|_2^2 + \frac{1}{2n_0} \|z_0\|_2^2 \right] \times \frac{1}{p} \tr\left(\frac{Z^\top Z}{n-2} + \lambda I_p\right)^{\dagger} \nonumber \\
&\arrowas -\frac{1}{2} (\Delta^2 + \gamma_0 -\gamma_1) m(-\lambda). 
\label{apeq:reg-numerator}
\end{align}
Similarly we can derive 
\begin{align}
\|\widehat\beta \|_\Sigma^2
= & (\widehat{\mu}_0-\widehat{\mu}_1)^\top (\widehat\Sigma+\lambda I_p)^\dagger \Sigma (\widehat\Sigma + \lambda I_p)^\dagger(\widehat{\mu}_0-\widehat{\mu}_1)\nonumber  \\
= & ({\mu}_d + \frac{1}{\sqrt{n_0}}{z}_0 - \frac{1}{\sqrt{n_1}}{z}_1)^\top  \left(\left(\frac{Z^\top Z}{n-2} + \lambda I_p\right)^{\dagger}\right)^2 ({\mu}_d + \frac{1}{\sqrt{n_0}}{z}_0 - \frac{1}{\sqrt{n_1}}{z}_1)\nonumber \\
\overset{d}{=}& \norm{\mu_d + \frac{1}{\sqrt{n_0}}z_0 - \frac{1}{\sqrt{n_1}}z_1}_2^2 \times \frac{1}{p} \tr\left(\left(\frac{Z^\top Z}{n-2}+\lambda I_p\right)^\dagger\right)^2\nonumber\\
\arrowas & (\Delta^2 + \gamma_0 + \gamma_1)m'(-\lambda). \label{apeq:reg-denominator}
\end{align}
Combining \eqref{apeq:reg-numerator} and \eqref{apeq:reg-denominator}, as well as putting the threshold term $\ln(n_1/n_0)$ back, we obtain
\begin{equation*}
    q_0 \arrowas \frac{-\frac{1}{2}(\Delta^2 -\gamma_0+\gamma_1)m(-\lambda)+\ln{\frac{\gamma_0}{\gamma_1}}}{\sqrt{(\Delta^2 + \gamma_1 + \gamma_0)m'(-\lambda)}}.
\end{equation*}

The same argument of analyzing $q_0$ applies to $q_1$ and therefore, we have 
\begin{equation*}
    q_1 \arrowas \frac{-\frac{1}{2}(\Delta^2 + \gamma_0 - \gamma_1)m(-\lambda)+\ln{\frac{\gamma_1}{\gamma_0}}}{\sqrt{(\Delta^2 + \gamma_1 + \gamma_0)m'(-\lambda)}}.
\end{equation*}
We complete the proof by substituting $q_0,q_1$ above into \eqref{apeq:risk}.
\end{proof}

\subsection{Proof of Regularized Phase Transition in Section \ref{sec:rldaphase}}\label{appendix:rldaphasetransition}
In this section, we show with a strong regularization, the phase transition phenomenon will vanish. Denote the asymptotic misclassification error in Theorem \ref{theorem2regularized}  as 
$$\cR_\lambda(\gamma_0, \gamma_1) = \sum_{\ell=0,1} \Phi\left(\frac{g(\gamma_0, \gamma_1, \ell)m(-\lambda)+(-1)^\ell\ln{\frac{\gamma_0}{\gamma_1}}}{k(\gamma_0, \gamma_1)\sqrt{m'(-\lambda)}}\right),$$
and we use the shorthand $\cR_\lambda(\gamma)$ to denote $\cR_\lambda(\gamma_0,\gamma_1)$ with the balanced data, i.e., $\gamma_0=\gamma_1 = 2\gamma$,
\begin{equation*}
    \cR_\lambda(\gamma) := \cR_\lambda(2\gamma,2\gamma) = \Phi\left(\frac{-\Delta^2 m(-\lambda)}{2\sqrt{(\Delta^2+4\gamma)m'(-\lambda)})}\right).
\end{equation*}
We show the phase transition phenomenon vanishes with a strong regularization, namely,
\begin{equation}
    \frac{\partial }{\partial \gamma_1} \cR_\lambda(\gamma_0,\gamma_1)\given_{\gamma_0=\gamma_1=2\gamma} > 0 \quad \text{for a strong $\lambda>0$}. \nonumber
\end{equation}
To see the result above, we need to show that $\frac{\partial \cR_\lambda(\gamma)}{\partial \gamma}>0$ with a strong regularization. Specifically, invoking Chain rule and by some manipulation, we have
$$\frac{\partial}{\partial \gamma_1}\cR_\lambda(\gamma_0, \gamma_1)\given_{\gamma_0=\gamma_1=2\gamma} = \frac{\partial \cR_\lambda(\gamma)  }{\partial \gamma}\frac{\partial \gamma}{\partial \gamma_1}\given_{\gamma_0=\gamma_1=2\gamma}.$$ 
By Mathematica Software \citep{Mathematica}, we check 
$$
    \frac{\partial \cR_\lambda(\gamma)}{\partial \gamma} =  \phi\left(\frac{-\Delta^2m(-\lambda)}{2\sqrt{(\Delta^2+4\gamma)m'(-\lambda)}}\right) \underbrace{\frac{\partial }{\partial \gamma} \left(\frac{-\Delta^2m(-\lambda)}{2\sqrt{(\Delta^2+4\gamma)m'(-\lambda)}}\right)}_{(\star)},
$$
where $\phi$ is the pdf of the standard normal distribution. By Mathematica Software \citep{Mathematica}, the denominator of $(\star)$ is also always positive, and is given by
\begin{align*}
&2 \sqrt{2} \gamma ^3 \lambda ^3 \frac{\left[(\gamma +\lambda +1)^2-4 \gamma \right]^{3/2}\left(4 \gamma +\Delta ^2\right)^{3/2}}{\left[\gamma  \lambda ^2 \left((\gamma +\lambda +1)^2-4 \gamma \right)\right]^{3/2}} 
\\& \times \Big[\gamma ^3 + \gamma ^2 (\sqrt{2 (\gamma +1) \lambda +(\gamma -1)^2+\lambda ^2}+2 \lambda -3)\\
&\quad+\gamma  \Big(-2 \sqrt{\gamma ^2+2 \gamma  (\lambda -1)+(\lambda +1)^2}+\lambda  \sqrt{2 (\gamma +1) \lambda +(\gamma -1)^2+\lambda ^2}+\lambda ^2+3\Big)\\
&\quad+(\lambda +1) (\sqrt{2 (\gamma +1) \lambda +(\gamma -1)^2+\lambda ^2}-\lambda -1)\Big]^{3/2}\\
&\hspace{2cm}.
\end{align*}
As a result, the sign of $\frac{\partial \cR_\lambda(\gamma)}{\partial \gamma}$ is determined by the numerator of $(\star)$ given as

\begin{align*}
  &\Delta ^2 \Big\{\gamma  (\lambda +1) \Big[\Delta ^2 \Big(-3 \sqrt{2 (\gamma +1) \lambda +(\gamma -1)^2+\lambda ^2}
  -2 \lambda \sqrt{2 (\gamma +1) \lambda +(\gamma -1)^2
  +\lambda ^2}\\
  &+\lambda ^2+5 \lambda+4\Big)+8 (\lambda +1)^2 \left(\sqrt{2 (\gamma +1) \lambda +(\gamma -1)^2+\lambda ^2}-\lambda -1\right)\Big]\\
  & +\gamma ^2 \Big[\Delta ^2 \Big(2 \lambda  \sqrt{2 (\gamma +1) \lambda +(\gamma -1)^2+\lambda ^2}+3 \sqrt{2 (\gamma +1) \lambda +(\gamma -1)^2+\lambda ^2}\\& -6 \lambda -6\Big) -4 (\lambda +1) \Big(2 \lambda  \sqrt{2 (\gamma +1) \lambda +(\gamma -1)^2+\lambda ^2}-9 \lambda -9\\&+7 \sqrt{2 (\gamma +1) \lambda +(\gamma -1)^2+\lambda ^2}\Big)\Big]
  +\gamma ^3 \Big[4 \lambda  \Big(\sqrt{\gamma ^2+2 \gamma  (\lambda -1)+(\lambda +1)^2}\\&+\lambda \sqrt{2 (\gamma +1) \lambda +(\gamma -1)^2+\lambda ^2}+\lambda ^2-9\Big)+36 \sqrt{\gamma ^2+2 \gamma  (\lambda -1)+(\lambda +1)^2}
  \\&+\Delta ^2 \left(-\sqrt{2 (\gamma +1) \lambda +(\gamma -1)^2+\lambda ^2}+\lambda +4\right)-64\Big] +\gamma ^4 \Big[-\Delta ^2+56 \\&-20 \sqrt{\gamma ^2+2 \gamma  (\lambda -1)+(\lambda +1)^2}+4 \lambda  \left(2 \sqrt{\gamma ^2+2 \gamma  (\lambda -1)+(\lambda +1)^2}+3 \lambda -4\right)\Big]
  \\&+4 \gamma ^5 \left[\sqrt{2 (\gamma +1) \lambda +(\gamma -1)^2+\lambda ^2}+3 \lambda -6\right]+4 \gamma ^6
  \\
  &+\Delta ^2 (\lambda +1)^3 \left(\sqrt{2 (\gamma +1) \lambda +(\gamma -1)^2+\lambda ^2}-\lambda -1\right)\Big\}.
\end{align*}

Combining the denominator and numerator, $\frac{\partial}{\partial \gamma} \cR_\lambda(\gamma)$ is positive only when one of the following case happens,
\begin{enumerate}
    \item $\Delta >0 ~{\rm and}~ 0<\gamma \leq 1~{\rm and}~\lambda >0$.
    \item $\Delta >0 ~{\rm and}~ \gamma \geq \frac{\sqrt{\Delta ^2+4}}{2}+1~{\rm and}~ \lambda >0$.
    \item $\Delta >0 ~{\rm and}~ 1<\gamma <\frac{\sqrt{\Delta ^2+4}}{2}+1 ~{\rm and}~ \lambda \geq \text{the smallest real root of}\\ 
    \big[\text{$\#$1}^4 (32 \gamma +4 \Delta ^2)+\text{$\#$1}^3 (96 \gamma ^2+8 \gamma  \Delta ^2+128 \gamma +16 \Delta ^2)
    +\text{$\#$1}^2 (96 \gamma ^3+112 \gamma ^2+16 \gamma  \Delta ^2+192 \gamma +\Delta ^4+24 \Delta ^2)
    +\text{$\#$1} (-8 \gamma ^3 \Delta ^2-16 \gamma ^2 \Delta ^2+32 \gamma ^4-96 \gamma ^3-64 \gamma ^2-2 \gamma  \Delta ^4+8 \gamma  \Delta ^2+128 \gamma+2 \Delta ^4+16 \Delta ^2) -4 \gamma ^4 \Delta ^2 +16 \gamma ^3 \Delta ^2+\gamma ^2 \Delta ^4-16 \gamma ^2 \Delta ^2-16 \gamma ^4+64 \gamma ^3- 80 \gamma ^2-2 \gamma  \Delta ^4+32 \gamma +\Delta ^4+4 \Delta ^2\big]$.
\end{enumerate}

Consequently, We deduce that the misclassification error increases when $\gamma$ grows in the interval $\left(0, 1\right)$ or $\left(\frac{\sqrt{\Delta^2+4}}{2}, \infty\right)$; when $\gamma$ grows in $\left(1,\frac{\sqrt{\Delta^2+4}}{2}\right)$, the misclassification error decreases when $\lambda$ is small, yet increases when $\lambda$ is large. For example, when $\Delta^2=9$ and $\lambda=1$, $\cR_\lambda(\gamma)$ increases monotonically with respect to $\gamma$, and the peaking phenomenon disappears. Meanwhile we have the instantaneous derivative $\frac{\partial \cR_\lambda(\gamma_0,\gamma_1)}{\partial \gamma_1}\given_{\gamma_0=\gamma_1=2\gamma} >0$ for any $\gamma_0$, which implies that the phase transition phenomenon vanishes.

The commands of Mathematica are provided as follows.

\hspace{-0.5cm}In[1]:
$m(\gamma \_,\lambda \_)\text{:=}\frac{\sqrt{(\gamma +\lambda +1)^2-4 \gamma }+\gamma -\lambda -1}{2 \gamma  \lambda }$

\hspace{-0.5cm}In[2]:
$R(\gamma \_,\lambda \_,\Delta \_)\text{:=}-\frac{\Delta ^2 m(\gamma ,\lambda )}{2 \sqrt{-\left(4 \gamma +\Delta ^2\right) \frac{\partial m(\gamma ,\lambda )}{\partial \lambda }}}$

\hspace{-0.5cm}In[3]:
$\text{de}(\gamma \_,\lambda \_,\Delta \_)\text{:=}$

\hspace{1.5cm}$\text{Evaluate}\left[\text{Denominator}\left[\text{FullSimplify}\left[\text{Together}\left[\frac{\partial R(\gamma ,\lambda ,\Delta )}{\partial \gamma }\right]\right]\right]\right]$

\hspace{-0.5cm}In[4]: 
$\text{nu}(\gamma\_ ,\lambda\_ ,\Delta\_ )\text{:=}$

\hspace{1.5cm}$\text{Evaluate}\left[\text{Numerator}\left[\text{FullSimplify}\left[\text{Together}\left[\frac{\partial R(\gamma ,\lambda ,\Delta )}{\partial \gamma }\right]\right]\right]\right]$

\hspace{-0.5cm}In[5]:
$\text{Reduce}[\text{de}(\gamma ,\lambda ,\Delta )\geq 0\land \lambda >0\land \gamma >0\land \Delta >0,\{\gamma ,\lambda \}]$

\hspace{-0.5cm}In[6]: 
$\text{Reduce}[\text{nu}(\gamma ,\lambda ,\Delta )\geq 0\land \lambda >0\land \gamma >0\land \Delta >0,\{\gamma ,\lambda \}]$

\hspace{-0.5cm}In[7]:
$\text{Reduce}[\text{nu}(\gamma ,1,3)\geq 0\land \gamma >0,\{\gamma \}]$

\bibliography{ref}    
\bibliographystyle{mplainnat}
\end{document}